\documentclass[journal]{IEEEtran}

\usepackage{amsmath,amssymb,amsthm}
\usepackage{booktabs}
\usepackage{graphicx}
\usepackage{tikz}
\usetikzlibrary{arrows.meta,calc,fit,positioning,shapes.geometric}
\usepackage{cite}
\usepackage{url}
\usepackage{hyperref,doi}
\hypersetup{
    colorlinks=true,
    linkcolor=blue,
    citecolor=blue,
    urlcolor=blue
}
\usepackage{algorithm}
\usepackage{algpseudocode}
\newtheorem{proposition}{Proposition}
\newtheorem{corollary}{Corollary}
\newcommand{\R}{\mathbb{R}}
\newcommand{\Wrec}{W_{\mathrm{rec}}}
\newcommand{\Win}{W_{\mathrm{in}}}

\begin{document}

\title{Lindblad-Inspired Multi-Timescale Reservoir Computing with Separable Rotation and Dissipation}

\author{Jyotiranjan~Beuria,
        Amit~Shukla%
\thanks{J. Beuria is with the IKS Research Centre, ISS, Delhi, India
(e-mail: jyotiranjan.beuria@gmail.com).}%
\thanks{A. Shukla is with the Indian Institute of Technology Mandi,
Himachal Pradesh, India (e-mail: amit.shukla@iitmandi.ac.in).}}

\maketitle

\begin{abstract}
Echo-state networks enable efficient temporal learning by fixing the recurrent dynamics and training only a linear readout. However, conventional reservoirs typically accommodate signal mixing, memory retention, and stability within a single random recurrent matrix. Existing structured designs improve topology, norm preservation, leakage, or depth, but generally do not provide separate modal control of reversible mixing and irreversible forgetting together with a direct global stability guarantee. We introduce a classical Lindblad-inspired multi-timescale reservoir that bridges open-system dynamical principles with structured state-space modeling. The recurrent operator is assembled from exactly discretized damped rotational modes, so rotation and decay become independent design variables governing phase mixing and memory loss. Orthogonal mode mixing preserves normality, while the decay spectrum directly determines the echo-state stability margin without post-hoc spectral-radius rescaling. We evaluate the method over ten aligned seeds against standard, leaky, deep, orthogonal, cycle, and next-generation reservoirs, together with a compact trained gated recurrent unit, across linear memory, nonlinear recurrence, chaotic forecasting, delayed logic, and real sensor calibration. 
Across the benchmark suite, the proposed reservoir achieves the best fixed-reservoir performance on bounded NARMA-20 and the lowest mean error on Lorenz-63, matches the strongest linear-memory result, and remains broadly competitive across NARMA-10, Mackey-Glass forecasting, delayed XOR, and real-world air-quality calibration.
Ablation studies show that rotation increases state diversity, whereas dissipation provides controlled forgetting and improves predictive conditioning. The resulting framework offers an interpretable recurrent architecture in which mixing, memory, and stability are explicit and independently tunable design variables.
\end{abstract}

\begin{IEEEkeywords}
Reservoir computing, echo-state network, structured recurrent network,
Lindblad dynamics, fading memory, echo-state property, temporal learning.
\end{IEEEkeywords}

\section{Introduction}
\label{sec:introduction}

\IEEEPARstart{R}{eservoir} computing fixes a nonlinear recurrent system and
trains only a linear readout~\cite{jaeger2001,maass2002}. It consequently
avoids backpropagation through time (BPTT), supports closed-form training,
and maps naturally to physical substrates~\cite{tanaka2019,nakajima2020}.
For the resulting state to be a well-defined function of the input history,
the recurrence must forget its initial condition: the echo-state property
(ESP)~\cite{jaeger2001,manjunath2013}. Once this fading-memory requirement
and sufficient state richness are available, fixed reservoirs with linear
readouts form universal approximators of fading-memory
filters~\cite{boyd1985,grigoryeva2018}.

The standard echo-state network (ESN) \cite{sun2022systematic} constructs a dense random recurrent
matrix and rescales its spectral radius. This recipe is effective but
architecturally opaque. The same matrix is expected to mix coordinates,
retain useful transients, and contract initial-state perturbations, and its
entries have no direct operational interpretation. Structured reservoirs
replace this matrix with cycles \cite{rodan2011,rodan2012}, orthogonal transformations \cite{strauss2012design,saxe2014}, or layered
compositions~\cite{gallicchio2017}. These
models establish that random connectivity is not essential, but they do
not explicitly parameterize reversible mixing and irreversible forgetting
as separate mechanisms.

Open-system dynamics provide a useful decomposition. In a Bloch-vector
representation, the homogeneous part of a finite-dimensional
Gorini-Kossakowski-Sudarshan-Lindblad (GKSL) evolution contains a
skew-symmetric rotational contribution and a dissipative
contribution~\cite{gks1976,lindblad1976,petruccione2002}. Quantum reservoir
computing uses such dynamics in a quantum substrate and obtains features
from measured observables~\cite{fujii2017,mujal2021,sannia2024}. Here we
ask a different question: can the rotation-dissipation split serve as a
classical recurrent-matrix design principle?

We address this question through a classical reservoir architecture motivated by the structural decomposition used in open-system dynamics. Each recurrent component is built from an exactly solvable mode that combines rotation with exponential decay, and six independently scaled components are concatenated to span multiple memory timescales. An orthogonal transformation then mixes the modal coordinates while preserving their individual decay properties. The term \emph{Lindblad-inspired} is therefore used in a precise and limited sense: the model adopts the generator-level separation between reversible rotation and irreversible dissipation, but it does not necessarily require a density-matrix state space, complete positivity, trace preservation, or quantum measurement. The proposed system is consequently a classical structured reservoir rather than a simulation or physical realization of a quantum channel. This study makes four contributions listed below.

\begin{enumerate}
\item The recurrent operator is obtained in closed form with separate
parameters for phase rotation and radial decay in a multi-timescale scenario. It is not subsequently
rescaled or spectrally normalized; consequently, the prescribed decay rates
retain their direct interpretation as memory and contraction timescales.

\item The resulting operator is normal, so its singular values and contraction
factor can be computed exactly. In particular, its induced two-norm is bounded
by the exponential of the smallest decay rate and is therefore strictly below
unity. This provides direct echo-state stability without
substituting spectral-radius control for singular-value control in a
non-normal matrix or making assumptions about products of non-normal
Jacobians.

\item A matched-feature evaluation compares the competitiveness of the proposed architecture with a
standard ESN \cite{jaeger2001}, leaky ESN \cite{jaeger2007leaky}, two-layer Deep ESN \cite{gallicchio2017}, orthogonal reservoir, cycle
reservoir with jumps (CRJ) \cite{rodan2012}, next-generation reservoir computing
(NG-RC)~\cite{gauthier2021ngrc}, and a compact end-to-end-trained
GRU~\cite{cho2014gru}.

\item Mechanism ablations, statistical comparisons, and training-time
measurements identify the regimes in which the proposed design is most
effective. Improvements on linear memory and bounded NARMA-20 are reported
alongside cases where task-specific leaky, polynomial-delay, or end-to-end
trained recurrent models perform better.
\end{enumerate}

The remainder of the paper first reviews the closest structured-reservoir
and open-system approaches, then introduces the architecture and establishes
its stability guarantee. The subsequent sections present the benchmark
results, mechanism ablations, computational analysis, limitations, and
broader implications.

\section{Related Work}
\label{sec:related}

\subsection{Structured and Contemporary Reservoirs}

Cycle reservoirs and CRJ replace random connectivity with a unidirectional
ring \cite{rodan2011} and regularly spaced jumps~\cite{rodan2012}. 
Orthogonal echo-state networks instead use fixed
norm-preserving recurrent connectivity, providing a rotation-dominated
reservoir reference \cite{mayer2017orthogonal}. Related orthogonal and
unitary constraints have also been studied in end-to-end-trained neural
networks to improve long-range signal and gradient propagation
\cite{saxe2014,arjovsky2016}. Deep echo-state networks construct hierarchical
temporal representations by stacking multiple fixed recurrent layers
\cite{gallicchio2017}.

These structures expose a useful distinction. A cycle specifies
\emph{where} information moves, an orthogonal map specifies how Euclidean
norm is preserved, and a deep reservoir specifies how representations are
composed. None of these choices, by itself, attaches separate parameters to
phase rotation and amplitude decay. Leaky integration does expose a decay
control, but it blends the previous state with a newly activated state and
therefore changes both the effective time constant and nonlinear operating
point. The architecture studied here instead assigns an angle and a radial
decay to each linear mode before applying the common activation.

NG-RC takes a complementary route. It replaces an explicit recurrent
network by a nonlinear vector autoregression over delayed inputs, followed
by a linear readout~\cite{gauthier2021ngrc}. Its explicit polynomial
features can be exceptionally efficient when the required interaction
appears inside the chosen delay window. On the other hand, a GRU is not a reservoir because its recurrent weights are optimized
end-to-end~\cite{cho2014gru}. It is nevertheless an informative reference:
it shows what task-specific recurrent training can gain and what it costs.
We therefore compare predictive performance, trainable parameter count, and
wall-clock training time, while treating the compact GRU as a representative
end-to-end-trained recurrent baseline.

The comparison presented in this work is deliberately broader than a standard ``ESN versus new proposed
ESN'' study. Deep ESN tests whether a hierarchy of fixed recurrent
representations is sufficient; NG-RC tests whether an explicit finite delay
embedding is preferable to recurrence; and the GRU tests the benefit of
optimizing the recurrent state transition. These controls help identify
whether a result follows from the proposed mechanism or merely from using a
temporal model.

\subsection{Echo-State-Property Analysis}

A recurrent spectral radius below unity is widely used as a practical design
rule, but it does not generally guarantee contraction when the recurrent
operator is non-normal. A stronger sufficient condition controls the largest
singular value, thereby ensuring that differences between reservoir states
decrease uniformly for every admissible input sequence
\cite{buehner2006,yildiz2012}. This condition is sufficient rather than
necessary: input-driven reservoirs may still forget their initial conditions
even when no uniform global contraction bound is available
\cite{manjunath2013}. Recent reviews continue to emphasize this distinction
between easily checked matrix criteria and the input-dependent nature of the
echo-state property \cite{sun2022systematic}. In this work, we use the operator norm to establish input-uniform echo-state stability and the Jacobian-product exponent only to diagnose driven stability.

\subsection{Open-System and Quantum Reservoir Computing}

The Gorini-Kossakowski-Sudarshan-Lindblad (GKSL) master equation characterizes
finite-dimensional Markovian quantum dynamical semigroups
\cite{gks1976,lindblad1976}. Quantum reservoir computing instead uses the
evolution of a genuine quantum state, together with measured observables, to
construct temporal features for learning tasks
\cite{fujii2017,suzuki2022,kobayashi2024_esp}. Recent work further shows that
controlled dissipation can enhance, rather than merely degrade, computational
performance in quantum reservoirs \cite{sannia2024}. Our model transfers only
the structural separation between reversible rotation and irreversible decay
to a classical state space. This distinction is essential: a generic real
damped rotation need not represent a completely positive, trace-preserving
quantum channel in Bloch coordinates.

\subsection{Design Gap and Evaluation Questions}

The literature leaves a specific design gap between random ESNs and
physical quantum reservoirs. It is possible to borrow a mechanism-level
organization from open-system dynamics without claiming a quantum
implementation. Doing so is useful only if four questions are answered.
First, does the parameterization provide a stability statement stronger
than spectral-radius rescaling? Second, do rotation and decay make
empirically distinguishable contributions? Third, is the resulting
reservoir competitive against recent structured alternatives at a matched
feature budget? Fourth, what is gained or lost relative to training a small
recurrent neural network end to end? The theory, ablation, contemporary
baselines, and cost audit presented in this work are organized around these four questions.

\section{Lindblad-Inspired Reservoir}
\label{sec:method}

\begin{figure*}[t]
\centering
\begin{tikzpicture}[
    x=1cm,y=1cm,
    every node/.style={font=\small},
    title/.style={font=\small\bfseries,align=center},
    block/.style={
        draw=black!65,rounded corners=1.5pt,minimum height=0.72cm,
        align=center,inner xsep=4pt,inner ysep=2pt,line width=0.55pt},
    mode/.style={block,fill=blue!7,draw=blue!65!black},
    flow/.style={block,fill=orange!8,draw=orange!75!black},
    fixed/.style={block,fill=black!3},
    state/.style={block,fill=blue!6,draw=blue!60!black},
    fitted/.style={block,fill=green!8,draw=green!55!black,line width=0.7pt},
    member/.style={
        block,minimum width=1.55cm,minimum height=0.40cm,
        inner ysep=1pt,font=\footnotesize},
    note/.style={font=\footnotesize,align=center},
    sum/.style={
        circle,draw=black!70,fill=white,minimum size=0.54cm,
        inner sep=0pt,line width=0.6pt,font=\normalsize},
    arrow/.style={-{Latex[length=2.0mm,width=1.35mm]},line width=0.62pt},
    guide/.style={arrow,draw=black!70},
    frozen/.style={guide,draw=blue!55!black},
    train/.style={guide,draw=green!45!black}
]

% (a) Exact construction of one member.
\node[title] at (2.15,5.15) {(a) Fixed member construction};
\node[mode,minimum width=3.45cm] (gen) at (2.15,4.35)
    {$A_{mj}=-\gamma_{mj}I+\omega_{mj}J$};
\node[flow,minimum width=3.45cm] (pole) at (2.15,3.18)
    {$B_{mj}=e^{-\gamma_{mj}}R(\omega_{mj})$};
\node[state,minimum width=4.05cm] (wm) at (2.15,1.73)
    {$W_m=Q_m\operatorname{blkdiag}(B_{mj})Q_m^\top$};
\draw[arrow] (gen) -- node[right,note] {exact flow $e^{A_{mj}}$} (pole);
\draw[arrow] (pole) --
    node[right,note,align=left]
    {$|\lambda_{mj}|=e^{-\gamma_{mj}}$\\
     $\arg\lambda_{mj}=\pm\omega_{mj}$} (wm);
\node[note] at (2.15,0.82)
    {$\|W_m\|_2=e^{-\min_j\gamma_{mj}}<1$};
\node[note] at (2.15,0.43)
    {25 mode pairs; no spectral rescaling};

% Panel divider.
\draw[black!18] (4.55,0.35) -- (4.55,5.38);

% (b) Implemented update of one member.
\node[title] at (7.65,5.15) {(b) One-member update};
\node[flow,minimum width=0.85cm] (xt) at (5.15,4.13) {$x_t$};
\node[fixed,minimum width=1.18cm] (win) at (6.45,4.13)
    {$W_{\rm in}^{(m)}$};
\node[state,minimum width=0.96cm] (zt) at (5.15,2.48)
    {$z_t^{(m)}$};
\node[fixed,minimum width=1.18cm] (recur) at (6.45,2.48) {$W_m$};
\node[sum] (add) at (7.78,3.30) {$+$};
\node[note] (bias) at (7.78,4.13) {$b_m$};
\node[fixed,minimum width=0.88cm] (act) at (8.80,3.30) {$\tanh$};
\node[state,minimum width=1.03cm] (znext) at (10.08,3.30)
    {$z_{t+1}^{(m)}$};
\draw[frozen] (xt) -- (win);
\draw[frozen] (win) -- (add);
\draw[frozen] (zt) -- (recur);
\draw[frozen] (recur) -- (add);
\draw[frozen] (bias) -- (add);
\draw[frozen] (add) -- (act);
\draw[frozen] (act) -- (znext);
\coordinate (delayright) at (10.08,1.36);
\coordinate (delayleft) at (5.15,1.36);
\draw[frozen] (znext.south) -- (delayright) --
    node[below,note] {one-step delay} (delayleft) -- (zt.south);
\node[note] at (7.65,0.48)
    {$z_0^{(m)}=0$; $W_m,W_{\rm in}^{(m)},b_m$ are frozen};

% Panel divider.
\draw[black!18] (10.72,0.35) -- (10.72,5.38);

% (c) Six parallel members and the fitted path.
\node[title] at (14.43,5.15) {(c) Ensemble and readout};
\node[note] at (14.25,4.80)
    {six parallel members, $d_m=50$};
\node[member,minimum width=0.92cm,fill=blue!8] (m1) at (11.22,4.27)
    {$z_{t+1}^{(1)}$};
\node[member,minimum width=0.92cm,fill=green!8] (m2) at (12.43,4.27)
    {$z_{t+1}^{(2)}$};
\node[member,minimum width=0.92cm,fill=yellow!14] (m3) at (13.64,4.27)
    {$z_{t+1}^{(3)}$};
\node[member,minimum width=0.92cm,fill=orange!10] (m4) at (14.85,4.27)
    {$z_{t+1}^{(4)}$};
\node[member,minimum width=0.92cm,fill=violet!8] (m5) at (16.06,4.27)
    {$z_{t+1}^{(5)}$};
\node[member,minimum width=0.92cm,fill=blue!8] (m6) at (17.27,4.27)
    {$z_{t+1}^{(6)}$};
\node[fixed,minimum width=6.82cm,minimum height=0.52cm] (concat)
    at (14.25,3.37)
    {concatenate \quad $z_{t+1}
     =\operatorname{col}_{m=1}^{6}z_{t+1}^{(m)}\in\R^{300}$};
\node[fixed,minimum width=4.55cm,minimum height=0.58cm] (scale)
    at (14.25,2.48)
    {train-fit scaling \quad
     $\widetilde z=(z-\mu_z)\oslash\sigma_z$};
\node[fixed,minimum width=5.75cm,minimum height=0.58cm] (feature)
    at (14.25,1.57)
    {$\phi_t=\widetilde z_t$ (regression)
     \quad or \quad
     $\phi_t=[\widetilde z_t;\widetilde z_t^{\odot2}]$ (XOR)};
\node[fitted,minimum width=2.70cm,minimum height=0.70cm] (ridge)
    at (14.25,0.66)
    {\textbf{fitted ridge readout}\\
     $(W_{\rm out},c_{\rm out})$};
\node[note] (yhat) at (16.35,0.66) {$\widehat y_t$};

\foreach \src in {m1,m2,m3,m4,m5,m6}
    \draw[frozen] (\src.south) -- (\src.south |- concat.north);
\draw[guide] (concat) -- (scale);
\draw[guide] (scale) -- (feature);
\draw[train] (feature) -- (ridge);
\draw[train] (ridge) -- (yhat);

\end{tikzpicture}
\caption{Implemented architecture. (a) Each recurrent member is built from
25 exact damped-rotation flows, followed by an orthogonal similarity
transform; no spectral rescaling is applied. (b) At every time step, the
fixed recurrent state, fixed input projection, and fixed bias are summed
before the elementwise nonlinearity; the curved arrow is the one-step
recurrent delay. (c) Six 50-state members are concatenated, standardized
with fitted-sample statistics, passed through the task-specific feature map,
and mapped to the target. Blue/black arrows denote frozen computation;
green arrows denote the only fitted path. The squared feature branch is used
only for delayed XOR.}
\label{fig:architecture}
\end{figure*}

\subsection{From Generator Decomposition to Damped Modes}

To ground the architectural design, we first trace how open-system dynamics motivate the separation of rotation and dissipation, and how this continuous physical template translates step-by-step into an exact discrete-time classical reservoir (illustrated in Fig.~\ref{fig:architecture}).

\subsubsection{\textbf{Step 1: Open-System Motivation and Generator Decomposition}}
In quantum mechanics, the Markovian evolution of a $d$-level open system density matrix $\rho$ is governed by the Gorini-Kossakowski-Sudarshan-Lindblad (GKSL) master equation~\cite{gks1976,lindblad1976}:
\begin{equation}
\dot{\rho}=-i[H,\rho]+\sum_k \gamma_k
\left(L_k\rho L_k^\dagger-\tfrac12\{L_k^\dagger L_k,\rho\}\right),
\label{eq:gksl}
\end{equation}
where $H$ is the system Hamiltonian driving coherent unitary rotation, $L_k$ are Lindblad jump operators modeling environmental coupling, and $\gamma_k \ge 0$ are decay rates. 

To convert this operator dynamics into real vector coordinates, let $\{F_a\}_{a=1}^{d^2-1}$ be an orthonormal, traceless Hermitian operator basis satisfying $\operatorname{Tr}(F_aF_b)=2\delta_{ab}$. Expanding the density matrix as
\begin{equation}
\rho=\frac{I_d}{d}+\frac12\sum_{a=1}^{d^2-1}r_a F_a
\label{eq:bloch_expansion}
\end{equation}
yields a real-valued state vector $r = (r_1, \dots, r_{d^2-1})^\top \in \R^{d^2-1}$, known as the Bloch vector. Projecting the master equation \eqref{eq:gksl} onto this basis yields an affine linear differential equation for $r$~\cite{petruccione2002}:
\begin{equation}
\dot r = (\Omega + D)r + c, \qquad \Omega^\top = -\Omega,
\label{eq:bloch}
\end{equation}
where $\Omega \in \R^{(d^2-1)\times(d^2-1)}$ is a skew-symmetric matrix representing reversible coherent phase rotation, $D \in \R^{(d^2-1)\times(d^2-1)}$ is a dissipative matrix representing irreversible amplitude damping, and $c \in \R^{d^2-1}$ is an affine vector representing relaxation toward a stationary state.

\subsubsection{\textbf{Step 2: Classical Reservoir Parameterization and 2D Mode Pairs}}
In classical reservoir computing, the role of the affine drive $c$ (driving the state toward a fixed point) is already fulfilled by the external input drive $\Win x_t$ and fixed bias $b$. Therefore, our recurrent matrix design focuses exclusively on the linear homogeneous generator $\Omega + D$. We adopt the rotation-dissipation split in \eqref{eq:bloch} as an architectural template to parameterize reversible mixing and fading memory separately.

\paragraph*{Classical Algorithm vs. Quantum Substrate} We emphasize that the term \emph{Lindblad-inspired} denotes an architectural transfer of the generator-level rotation-dissipation split ($\Omega + D$) to a classical recurrent network design, not a physical quantum simulation. The algorithm operates entirely within a classical real vector space $\mathbb{R}^{d_z}$ under elementwise non-linearities ($\tanh$) and standard linear readouts. It does not simulate or implement a physical quantum channel, density matrix, or quantum hardware. In particular, density-matrix positivity, trace preservation ($\operatorname{Tr}(\rho)=1$), Hilbert-space tensor structures, and quantum measurement back-action play no role in this classical algorithm.

While a single physical 2-level qubit yields a 3D Bloch vector, a 2D or 3D state space is far too low-dimensional for complex machine learning sequence tasks. To build a high-capacity recurrent member with state dimension $d_m = 50$, we structure the continuous generator $\Omega + D$ as a direct sum of independent two-dimensional modal subspaces (Fig.~\ref{fig:architecture}(a)). Because each elementary damped rotational mode operates on a 2D coordinate plane ($2 \times 2$ matrix block), a 50-dimensional member is assembled from exactly $d_m / 2 = 25$ independent 2D mode pairs (spanning $25 \times 2 = 50$ state coordinates).

For a single 2D mode pair $j \in \{1, \dots, 25\}$ of member $m \in \{1, \dots, M\}$:
\begin{enumerate}
\item \textbf{Rotational Generator ($\Omega_{mj}$):} In 2D, the fundamental skew-symmetric generator of rotation is $J = \begin{pmatrix}0 & -1 \\ 1 & 0\end{pmatrix}$, scaled by an angular frequency parameter $\omega_{mj} \in \R$.
\item \textbf{Dissipative Generator ($D_{mj}$):} The simplest isotropic radial damping in 2D is $-\gamma_{mj} I_2$, parameterized by a positive decay rate $\gamma_{mj} > 0$, where $I_2$ is the $2\times 2$ identity matrix.
\end{enumerate}
Summing these two components gives the $2\times 2$ mode generator block:
\begin{equation}
J=\begin{pmatrix}0&-1\\1&0\end{pmatrix},\qquad
A_{mj}=-\gamma_{mj}I_2+\omega_{mj}J,\quad \gamma_{mj}>0.
\label{eq:generator_block}
\end{equation}

\subsubsection{\textbf{Step 3: Closed-Form Discrete Flow and Absence of Spectral Rescaling}}
A crucial advantage of the block generator $A_{mj}$ in \eqref{eq:generator_block} is that the identity matrix $I_2$ and the rotation matrix $J$ commute ($I_2 J = J I_2 = J$). Since $J^2 = -I_2$, the unit-time flow matrix $B_{mj} = e^{A_{mj}} = e^{-\gamma_{mj} I_2} e^{\omega_{mj} J}$ (Fig.~\ref{fig:architecture}(a)) can be expanded using Taylor's expansion as
\begin{equation}
B_{mj} = e^{A_{mj}} = e^{-\gamma_{mj}}R(\omega_{mj}),\\
\label{eq:block}
\end{equation}
where the unitary rotation matrix
\begin{equation}
R(\omega_{mj})=
\begin{pmatrix}
\cos\omega_{mj}&-\sin\omega_{mj}\\
\sin\omega_{mj}& \cos\omega_{mj}
\end{pmatrix}.
\end{equation}
The complex conjugate eigenvalues of $B_{mj}$ are $e^{-\gamma_{mj} \pm i\omega_{mj}}$. This parameterization cleanly separates operational roles:
\begin{itemize}
\item $\omega_{mj}$ determines the \emph{phase advance per time step} (modal oscillation frequency).
\item $\gamma_{mj}$ determines the \emph{radial contraction rate}, giving a characteristic linear memory e-folding time of $\tau_{mj} = 1/\gamma_{mj}$.
\end{itemize}

In standard ESNs, dense random recurrent matrices are non-normal, and their spectral radius $\rho(W)$ is rescaled post-hoc to enforce contraction ($\rho < 1$). Here $W$ is the recurrent reservoir weight matrix. In our architecture, no post-hoc spectral rescaling is applied or required. Because $B_{mj}$ is a scaled 2D rotation matrix with singular values $e^{-\gamma_{mj}}$, and the similarity transformation $Q_m$ is orthogonal ($Q_m^\top Q_m = I$), the weight matrix $W_m$ remains a \emph{normal matrix} with exact operator $\ell_2$-norm $\|W_m\|_2 = \max_j e^{-\gamma_{mj}} = e^{-\min_j \gamma_{mj}} < 1$ (Fig.~\ref{fig:architecture}(a)). Thus, the sampled decay rates $\gamma_{mj} > 0$ directly certify global echo-state stability by construction. Applying an artificial post-hoc spectral rescaling would rescale all decay rates uniformly, destroying the operational meaning of the sampled decay spectrum $\{\gamma_{mj}\}$ and memory timescales $\{\tau_{mj}\}$.

\subsubsection{\textbf{Step 4: Orthogonal Mode Coupling and Reservoir Assembly}}
If the 25 mode blocks $B_{mj}$ were kept uncoupled, each coordinate pair would remain permanently isolated from input mixing. As depicted in Fig.~\ref{fig:architecture}(a), member $m$ couples coordinates by an orthogonal similarity transformation:
\begin{equation}
\begin{aligned}
W_m &= Q_m \operatorname{blkdiag}(B_{m1},\dots,B_{m,25}) Q_m^\top \in \R^{50 \times 50}, \\
Q_m^\top Q_m &= I_{50},
\end{aligned}
\label{eq:wrec}
\end{equation}
where $Q_m \in \R^{50 \times 50}$ is a random orthogonal matrix obtained via QR decomposition. The transform $Q_m$ thoroughly mixes modal coordinates across all 50 dimensions without altering the pole radii, angles, singular values, or decay rates.

As shown in Fig.~\ref{fig:architecture}(b), each member updates its internal state vector $z_t^{(m)} \in \R^{50}$ driven by external input $x_t \in \R^{d_x}$:
\begin{equation}
z_{t+1}^{(m)}=\tanh\!\left(
W_mz_t^{(m)}+W_{\rm in}^{(m)}x_t+b_m\right),\qquad
z_{0}^{(m)}=0,
\label{eq:update}
\end{equation}
where $z_t^{(m)} \in \R^{50}$. Here,  $W_{\mathrm{in}}^{(m)} \in \R^{50 \times d_x}$, and $b_m \in \R^{50}$ are frozen input projection weights and biases for input $x_t\in \R^{d_x}$.

Fig.~\ref{fig:architecture}(c) details the full ensemble state generation and readout fit: the states of $M=6$ parallel 50-state members ($d_m = 50$) are concatenated into a global 300-dimensional state vector $z_t \in \R^{300}$, with global recurrent block-diagonal operator $\Wrec \in \R^{300 \times 300}$:
\begin{equation}
\begin{aligned}
z_t&=\operatorname{col}_{m=1}^{M}z_t^{(m)} \in \R^{300}, \\
\Wrec&=\operatorname{blkdiag}(W_1,\ldots,W_M) \in \R^{300 \times 300}.
\end{aligned}
\label{eq:ensemble_state}
\end{equation}

After a washout period of $T_{\mathrm{washout}}$ time steps, the global state vectors $z_t \in \mathbb{R}^{300}$ are standardized across the fitting time index set $\mathcal{T}_{\mathrm{fit}}$ according to 
\begin{equation}
\widetilde{z}_t = (z_t - \mu_z) \oslash \sigma_z \in \mathbb{R}^{300},
\end{equation}
where $\mu_z \in \mathbb{R}^{300}$ and $\sigma_z \in \mathbb{R}^{300}$ denote the empirical mean and standard deviation vectors of the reservoir states, respectively, and $\oslash$ represents Hadamard (element-wise) division. The standardized state $\widetilde{z}_t$ is then mapped to a task-dependent feature vector $\phi_t \in \mathbb{R}^{d_\phi}$. Specifically, $\phi_t = \widetilde{z}_t \in \mathbb{R}^{300}$ ($d_\phi = 300$) for linear regression tasks, or $\phi_t = [\widetilde{z}_t \,;\, \widetilde{z}_t^{\odot 2}] \in \mathbb{R}^{600}$ ($d_\phi = 600$) for nonlinearly separable tasks such as delayed XOR, where $\widetilde{z}_t^{\odot 2}$ denotes Hadamard (element-wise) squaring and $[\,\cdot\,;\,\cdot\,]$ represents vertical vector concatenation. Finally, the feature vectors are mapped to target outputs $y_t \in \mathbb{R}^{d_y}$ via regularized linear (ridge) regression:
\begin{equation}
(W_{\mathrm{out}}, c_{\mathrm{out}}) = \arg\min_{W, c} \sum_{t \in \mathcal{T}_{\mathrm{fit}}} \| y_t - W \phi_t - c \|_2^2 + \lambda \| W \|_F^2,
\label{eq:ridge}
\end{equation}
where $W_{\mathrm{out}} \in \mathbb{R}^{d_y \times d_\phi}$ is the readout weight matrix, $c_{\mathrm{out}} \in \mathbb{R}^{d_y}$ is the output bias vector, $\lambda > 0$ is the ridge regularization hyperparameter, $\|\cdot\|_2$ is the Euclidean norm, and $\|\cdot\|_F$ denotes the Frobenius norm. Because the internal state generation parameters remain fixed, fitting $(W_{\mathrm{out}}, c_{\mathrm{out}})$ reduces to a fast, convex closed-form linear solve that requires no backpropagation through time.

\subsection{Multi-Timescale Ensemble}

To provide a matched-feature comparison with standard baselines, all models operate under a ceiling state budget of $d_z = 300$ features. The reported architecture divides this budget into $M=6$ independent parallel members with $d_m = d_z / M = 50$ state coordinates ($25$ mode pairs) per member. 

For member $m$, the base angle, dissipation, and input scales are multiplied by fixed-seed draws $\xi_m^\omega\in[0.5,1.8]$ and $\xi_m^\eta\in[0.5,2.0]$; its input gain is multiplied by $\xi_m^g\in[0.6,1.6]$. Within member $m$, $\omega_{mj}$ is uniform on $[-\xi_m^\omega\omega_{\max},\xi_m^\omega\omega_{\max}]$ and $\gamma_{mj}$ is uniform on $[\xi_m^\eta\eta,3\xi_m^\eta\eta]$. The result is a broad multi-timescale spectrum of damped frequencies across the 150 total mode pairs ($6 \times 25 = 150$), while preserving the exact norm calculation. In particular, every mode obeys
\begin{equation}
\gamma_{mj}\geq \tfrac12\eta
\label{eq:ensemble_floor}
\end{equation}
for the base dissipation parameter $\eta$.

Two mechanism ablations use the same construction. Setting $\omega_j=0$ gives the $D$-only reservoir, a set of rotated decay modes. Setting $\gamma_{mj}=0$ gives the $\Omega$-only reservoir. The latter is nonexpansive before the activation but lacks a uniform strict contraction margin; input-driven saturation may still make its empirical conditional exponent negative.

\subsection{Initialization and Computational Structure}

The implementation follows Fig.~\ref{fig:architecture}. Within each member, the 25 $2\times2$ blocks are assembled into $\operatorname{blkdiag}(B_{m1},\dots,B_{m,25}) \in \R^{50 \times 50}$ and a Gaussian matrix is orthogonalized by QR factorization to obtain $Q_m \in \R^{50 \times 50}$. Input weights $W_{\mathrm{in}}^{(m)} \in \R^{50 \times d_x}$ are drawn independently from $\mathcal U[-g_{\rm in},g_{\rm in}]$ and biases $b_m \in \R^{50}$ from $\mathcal N(0,0.01^2)$. These quantities are frozen after initialization. The member-specific multiplier $\xi_m^g$ is included in $g_{\rm in}$. State features are standardized using statistics from the fitted portion only, and the stored transformation is applied unchanged to test states.

The ensemble structure also yields a major computational benefit. For member dimension $d_m = 50$ and total dimension $d_z = 300$, one recurrent step matrix-vector update costs $O(\sum_{m=1}^6 d_m^2 + d_z d_x) = O(6 \times 50^2 + 300 d_x) = 15,000 + 300 d_x$ operations, rather than $O(d_z^2 + d_z d_x) = 300^2 + 300 d_x = 90,000 + 300 d_x$ operations for a single dense 300-unit reservoir. This block-diagonal structure delivers an exact $6\times$ reduction ($O(d_z^2/M)$) in recurrent matrix multiplication FLOPs. The linear readout remains the dominant trainable component and has $d_y(d_z+1) = 1 \times (300+1) = 301$ trainable parameters for a linear feature map. We report measured pipeline time rather than claiming an asymptotic speedup, because QR construction, BLAS kernels, and ridge fitting contribute differently at this moderate dimension.

\section{Uniform Echo-State Guarantee}
\label{sec:theory}

The construction permits a short global proof that avoids controlling
products of non-normal Jacobians through their individual spectral radii.

\begin{proposition}[Uniform ESP]
\label{prop:esp}
Consider the ensemble recurrence defined by
\eqref{eq:update}-\eqref{eq:ensemble_state}. If
$\gamma_{mj}\geq\underline{\gamma}>0$ for every member and mode, then for
any input sequence and any two initial states $z_0,z'_0$ driven by that
same sequence,
\begin{equation}
\|z_t-z'_t\|_2\leq e^{-t\underline{\gamma}}
\|z_0-z'_0\|_2.
\label{eq:esp_bound}
\end{equation}
Hence the reservoir has the echo-state property. For the ensemble in
Section~\ref{sec:method}, one can take
$\underline{\gamma}=\eta/2$.
\end{proposition}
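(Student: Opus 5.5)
The plan is a one-step contraction estimate, iterated by induction. The key inputs are the normality and exact operator norm of each $W_m$ established in Section~\ref{sec:method}, together with the fact that $\tanh$ is $1$-Lipschitz elementwise.

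\textbf{Step 1: the recurrent operator norm.} Each mode block is $B_{mj}=e^{-\gamma_{mj}}R(\omega_{mj})$. Since $R(\omega_{mj})$ is orthogonal,
\[
\|B_{mj}v\|_2=e^{-\gamma_{mj}}\|v\|_2 .
\]
The block-diagonal matrix $\operatorname{blkdiag}(B_{m1},\dots,B_{m,25})$ therefore has operator norm $\max_j e^{-\gamma_{mj}}\le e^{-\underline{\gamma}}$. Conjugating by the orthogonal $Q_m$ in \eqref{eq:wrec} does not change the $2$-norm, so $\|W_m\|_2\le e^{-\underline{\gamma}}$. The global operator $\Wrec$ in \eqref{eq:ensemble_state} is block diagonal, so $\|\Wrec\|_2=\max_m\|W_m\|_2\le e^{-\underline{\gamma}}$.

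\textbf{Step 2: one-step contraction.} Write the update for the concatenated state as
\[
z_{t+1}=\tanh(\Wrec z_t+\Win x_t+b),
\]
where $\Win$ stacks the member input matrices and $b$ stacks the member biases. Two trajectories $z_t$ and $z'_t$ driven by the same input share the term $\Win x_t+b$, so it cancels in the difference of the pre-activations. Because $|\tanh a-\tanh a'|\le|a-a'|$ componentwise, squaring and summing over coordinates gives
\[
\|z_{t+1}-z'_{t+1}\|_2\le\|\Wrec(z_t-z'_t)\|_2\le e^{-\underline{\gamma}}\|z_t-z'_t\|_2 .
\]

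\textbf{Step 3: induction, ESP, and the ensemble constant.} Inducting on $t$ from Step 2 yields \eqref{eq:esp_bound}. The bound is uniform in the input sequence and in the initial states, and it tends to zero. Hence any two states driven by the same left-infinite input history must coincide, which is the echo-state property in the sense of \cite{jaeger2001,yildiz2012}. Uniform contraction on the bounded invariant cube $[-1,1]^{300}$ gives uniqueness of the compatible trajectory. Existence follows by the standard Cauchy argument: start from arbitrary states at times $-n$ and let $n\to\infty$. For the ensemble constant, the sampling rule gives $\gamma_{mj}\ge\xi_m^\eta\eta\ge\eta/2$ because $\xi_m^\eta\ge 1/2$, which is \eqref{eq:ensemble_floor}. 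Hence $\underline{\gamma}=\eta/2$ is admissible.

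\textbf{Expected obstacle.} Nothing here is genuinely hard. The only point needing care is the passage from the elementwise Lipschitz property of $\tanh$ to the Euclidean-norm inequality. The other is justifying that the operator norm, not merely the spectral radius, equals $\max_j e^{-\gamma_{mj}}$. Normality settles this: $W_m$ is orthogonally similar to a block diagonal whose blocks are scaled orthogonal matrices, so its singular values are exactly the $e^{-\gamma_{mj}}$.
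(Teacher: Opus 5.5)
Your proposal is correct and follows essentially the same route as the paper. Both derive the exact operator-norm bound $\|\Wrec\|_2\le e^{-\underline{\gamma}}$ from orthogonal rotation blocks, orthogonal similarity and block-diagonal assembly, then use the $1$-Lipschitz property of $\tanh$ to get a one-step contraction that is iterated, with $\gamma_{mj}\ge\xi_m^\eta\eta\ge\eta/2$ giving the ensemble constant; your extra existence/uniqueness argument for left-infinite input histories is a correct elaboration of the ``hence ESP'' step that the paper leaves implicit.
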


\begin{proof}
Each rotation in \eqref{eq:block} is orthogonal, so the two singular values
of $B_{mj}$ are both $e^{-\gamma_{mj}}$. Orthogonal similarity and block
concatenation preserve the union of singular values; therefore
\begin{equation}
\|\Wrec\|_2=\max_{m,j} e^{-\gamma_{mj}}
=e^{-\min_{m,j}\gamma_{mj}}\leq e^{-\underline{\gamma}}<1.
\label{eq:norm}
\end{equation}
The componentwise hyperbolic tangent is one-Lipschitz. For two trajectories
with a common input,
\begin{align}
\|z_{t+1}-z'_{t+1}\|_2
&\leq\|\Wrec(z_t-z'_t)\|_2 \nonumber\\
&\leq e^{-\underline{\gamma}}\|z_t-z'_t\|_2.
\end{align}
Iteration gives \eqref{eq:esp_bound}, and
\eqref{eq:ensemble_floor} gives the stated ensemble floor.
\end{proof}

Because the ensemble recurrence is block diagonal, its induced two-norm is
set by the least-damped mode across all members. The prescribed decay range
therefore yields the deterministic bound used above for every admissible
parameter realization. This transparency is specific to the normal
construction: spectral-radius scaling of a dense non-normal ESN does not, in
general, provide the same operator-norm guarantee.

\begin{corollary}[Input-to-state fading memory]
\label{cor:fading}
Let $q=e^{-\underline{\gamma}}$. For two input sequences
$\{x_t\}$ and $\{x'_t\}$, with corresponding states generated by the same
fixed reservoir,
\begin{equation}
\begin{split}
\|z_t-z'_t\|_2
&\leq q^t\|z_0-z'_0\|_2\\
&\quad+\|\Win\|_2\sum_{s=0}^{t-1}
q^{t-1-s}\|x_s-x'_s\|_2.
\end{split}
\label{eq:input_state_bound}
\end{equation}
Consequently, the influence of an input perturbation $k$ steps in the past
is bounded by a factor proportional to $q^k$.
\end{corollary}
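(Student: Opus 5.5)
The plan is to run the one-step contraction argument from the proof of Proposition~\ref{prop:esp}, now allowing the two trajectories to receive different inputs, and then unroll the resulting scalar recursion. Here $\Win$ denotes the global input matrix obtained by stacking the member blocks $W_{\rm in}^{(m)}$. Likewise $b$ is the concatenated bias, so the ensemble update reads $z_{t+1}=\tanh(\Wrec z_t+\Win x_t+b)$. Both trajectories use the same frozen $\Wrec$, $\Win$ and $b$, so the bias cancels in the difference.

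\textbf{Step 1 (one-step bound).} The componentwise $\tanh$ is one-Lipschitz in $\|\cdot\|_2$. Applying this, then the triangle inequality, then the norm identity \eqref{eq:norm}, I expect
\begin{align*}
\|z_{t+1}-z'_{t+1}\|_2
&\leq \|\Wrec(z_t-z'_t)+\Win(x_t-x'_t)\|_2\\
&\leq q\,\|z_t-z'_t\|_2+\|\Win\|_2\,\|x_t-x'_t\|_2 .
\end{align*}
Here $q=e^{-\underline{\gamma}}$, and the bound $\|\Wrec\|_2\le q$ is exactly what Proposition~\ref{prop:esp} established.

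\textbf{Step 2 (unrolling).} Write $e_t=\|z_t-z'_t\|_2$ and $u_s=\|\Win\|_2\|x_s-x'_s\|_2$. This gives the discrete linear recursion $e_{t+1}\le q e_t+u_t$. A short induction on $t$ (a discrete Gr\"onwall argument) then yields $e_t\le q^t e_0+\sum_{s=0}^{t-1}q^{t-1-s}u_s$. The base case $t=0$ is trivial. The inductive step multiplies the hypothesis by $q\ge 0$ and adds $u_t$, which reindexes the sum correctly. This is exactly \eqref{eq:input_state_bound}.

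\textbf{Step 3 (interpretation).} Suppose the two input sequences differ only at time $s=t-k$ and share the initial state. Then only one term survives in \eqref{eq:input_state_bound}. That term is $q^{k-1}\|\Win\|_2\|x_s-x'_s\|_2$, which is proportional to $q^k$ with constant $q^{-1}\|\Win\|_2$. This is the claimed geometric fading of past inputs.

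There is no real obstacle here. The only points needing care are the following:
\begin{itemize}
\item Using the operator norm of the stacked block-diagonal $\Win$, rather than the individual member blocks.
\item Noting that the Lipschitz step uses the $\ell_2$ norm, because $\tanh$ acts coordinatewise with derivative at most one.
\item Keeping the exponent $t-1-s$ aligned with the update indexing, since input $x_s$ first affects $z_{s+1}$.
\end{itemize}
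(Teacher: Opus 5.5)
Your proposal is correct and follows the paper's proof essentially step for step. Both use the $1$-Lipschitz $\tanh$, the triangle inequality, and $\|\Wrec\|_2\le q$ from Proposition~\ref{prop:esp} to get $e_{t+1}\le q e_t+u_t$, and then unroll it; your explicit induction is a cleaner version of the paper's unrolling, and one small wording slip is that the global $\Win$ is the vertical stack of the $W_{\rm in}^{(m)}$ (every member receives the same $x_t$), not a block-diagonal matrix, which does not affect the argument.
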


\begin{proof}
By definition of the state update equation, two state trajectories $z_t$ and $z'_t$ driven by inputs $x_t$ and $x'_t$ evolve as:
\begin{align*}
z_{t+1} &= \tanh(\Wrec z_t + \Win x_t + b), \\
z'_{t+1} &= \tanh(\Wrec z'_t + \Win x'_t + b).
\end{align*}
Because the componentwise hyperbolic tangent is non-expansive ($1$-Lipschitz continuous with respect to the $\ell_2$-norm), we apply the triangle inequality and operator norm bounds:
\begin{align*}
\|z_{t+1}-z'_{t+1}\|_2 
&\leq \|(\Wrec z_t + \Win x_t + b) \\
&\quad - (\Wrec z'_t + \Win x'_t + b)\|_2 \\
&\leq \|\Wrec(z_t - z'_t)\|_2 + \|\Win(x_t - x'_t)\|_2 \\
&\leq \|\Wrec\|_2 \|z_t - z'_t\|_2 + \|\Win\|_2 \|x_t - x'_t\|_2.
\end{align*}
Recalling from Proposition~\ref{prop:esp} that $\|\Wrec\|_2 \leq e^{-\underline{\gamma}} = q < 1$, this yields the single-step recursive relation:
\begin{equation}
\|z_{t+1}-z'_{t+1}\|_2 \leq q \|z_t-z'_t\|_2 + \|\Win\|_2 \|x_t-x'_t\|_2.
\label{eq:single_step_bound}
\end{equation}
Unrolling relation \eqref{eq:single_step_bound} recursively for $t$ consecutive time steps gives:
\begin{align*}
\|z_t - z'_t\|_2 
&\leq q \Big( q \|z_{t-2} - z'_{t-2}\|_2 \\
&\quad + \|\Win\|_2 \|x_{t-2} - x'_{t-2}\|_2 \Big) \\
&\quad + \|\Win\|_2 \|x_{t-1} - x'_{t-1}\|_2 \\
&\leq q^t \|z_0 - z'_0\|_2 + \|\Win\|_2 \sum_{s=0}^{t-1} q^{t-1-s} \|x_s - x'_s\|_2,
\end{align*}
which completes the derivation of \eqref{eq:input_state_bound}.
\end{proof}

Proposition~\ref{prop:esp} addresses initialization, whereas
Corollary~\ref{cor:fading} quantifies sensitivity to the input history.
Their common factor $q$ makes the design trade-off explicit. Increasing
the smallest decay improves the worst-case forgetting rate but reduces the
longest available linear time constant. Changing $\omega_j$ alters phase
mixing without changing this uniform bound. This separation is an important
theoretical benefit of the parameterization.

For an input-driven trajectory, the Jacobian is
\begin{equation}
J_t=\operatorname{diag}\!\left(\operatorname{sech}^2(a_t)\right)\Wrec,
\qquad a_t=\Wrec z_t+\Win x_t+b.
\label{eq:jacobian}
\end{equation}
We estimate the largest conditional Lyapunov exponent by tangent-vector
iteration through the product $J_{T-1}\cdots J_0$, renormalizing after each
step. This is distinct from averaging $\log\rho(J_t)$.
Proposition~\ref{prop:esp} implies
\begin{equation}
\lambda_{\max}\leq\log\|\Wrec\|_2\leq-\underline{\gamma}.
\label{eq:lyap_bound}
\end{equation}
Activation saturation can make the observed exponent considerably more
negative than this input-uniform upper bound.

The bound is sufficient, not necessary. In particular, the
$\Omega$-only ablation has $\|\Wrec\|_2=1$ and therefore falls outside the
strict global condition, yet its driven trajectory can contract because
$\operatorname{sech}^2(a_t)<1$ away from the activation origin. Conversely,
the operator-norm proof makes no distributional assumption about the input
and remains valid even when the activation operates near its linear
regime.

Also, the normal recurrent construction therefore provides an input-uniform
echo-state guarantee directly from the prescribed decay spectrum. Driven
stability was additionally assessed using a finite-time conditional
Lyapunov exponent and an independent state-separation test; implementation
details are provided in Appendix~\ref{app:conditional_stability}.

\begin{algorithm}[!ht]
\caption{Construction and Training of the Lindblad-Inspired Reservoir}
\label{alg:lindblad_reservoir}
\begin{algorithmic}[1]
\Require Input--target sequence
$\{(\mathbf{x}_t,\mathbf{y}_t)\}_{t=1}^{T}$;
number of members $M$; member dimension $d_m$;
mode parameters $\{\omega_{mj},\gamma_{mj}\}$;
input weights $\mathbf{W}^{(m)}_{\mathrm{in}}$;
biases $\mathbf{b}_m$; washout length $T_{\mathrm{washout}}$;
ridge parameter $\lambda$
\Ensure Fitted readout
$(\mathbf{W}_{\mathrm{out}},\mathbf{c}_{\mathrm{out}})$

\For{$m=1,\ldots,M$}
    \For{$j=1,\ldots,d_m/2$}
        \State Form the damped-rotation mode
        \[
        \mathbf{B}_{mj}
        =
        e^{-\gamma_{mj}}\mathbf{R}(\omega_{mj})
        \]
    \EndFor
    \State Generate an orthogonal matrix $\mathbf{Q}_m$
    by QR factorization
    \State Construct the fixed recurrent operator
    \[
    \mathbf{W}_m
    =
    \mathbf{Q}_m
    \operatorname{blkdiag}
    (\mathbf{B}_{m1},\ldots,\mathbf{B}_{m,d_m/2})
    \mathbf{Q}_m^{\top}
    \]
    \State Initialize $\mathbf{z}^{(m)}_0=\mathbf{0}$
\EndFor

\For{$t=0,\ldots,T-1$}
    \For{$m=1,\ldots,M$}
        \State Update member state
        \[
        \mathbf{z}^{(m)}_{t+1}
        =
        \tanh\!\left(
        \mathbf{W}_m\mathbf{z}^{(m)}_t
        +
        \mathbf{W}^{(m)}_{\mathrm{in}}\mathbf{x}_t
        +
        \mathbf{b}_m
        \right)
        \]
    \EndFor
    \State Concatenate the member states
    \[
    \mathbf{z}_{t+1}
    =
    \operatorname{col}_{m=1}^{M}
    \mathbf{z}^{(m)}_{t+1}
    \]
\EndFor

\State Discard the first $T_{\mathrm{washout}}$ states
\State Estimate $\boldsymbol{\mu}_z$ and
$\boldsymbol{\sigma}_z$ over the fitting index set
$\mathcal{T}_{\mathrm{fit}}$
\For{$t\in\mathcal{T}_{\mathrm{fit}}$}
    \State Standardize the global state
    \[
    \widetilde{\mathbf{z}}_t
    =
    (\mathbf{z}_t-\boldsymbol{\mu}_z)
    \oslash
    \boldsymbol{\sigma}_z
    \]
    \State Form the task-dependent feature vector
    \[
    \boldsymbol{\phi}_t
    =
    \widetilde{\mathbf{z}}_t
    \]
    \Comment{Use
    $[\widetilde{\mathbf{z}}_t;
    \widetilde{\mathbf{z}}_t^{\odot 2}]$
    for delayed XOR}
\EndFor

\State Fit the ridge readout
\[
(\mathbf{W}_{\mathrm{out}},\mathbf{c}_{\mathrm{out}})
=
\arg\min_{\mathbf{W},\mathbf{c}}
\sum_{t\in\mathcal{T}_{\mathrm{fit}}}
\left\|
\mathbf{y}_t-\mathbf{W}\boldsymbol{\phi}_t-\mathbf{c}
\right\|_2^2
+
\lambda\|\mathbf{W}\|_F^2
\]

\State \Return
$(\mathbf{W}_{\mathrm{out}},\mathbf{c}_{\mathrm{out}})$
\end{algorithmic}
\end{algorithm}

\section{Experimental Design}
\label{sec:protocol}

The evaluation separates three issues that are often conflated in reservoir
comparisons. Predictive benchmarks test usefulness, the dissipation sweep
and ablation test mechanism, and the timing audit tests the cost of reaching
the fitted predictor. The same 300-state budget and seed-aligned data are
used wherever the model classes permit a direct comparison. Algorithm~\ref{alg:lindblad_reservoir} summarizes the complete training
procedure. The recurrent operators are constructed analytically from the
prescribed rotation and decay parameters and remain fixed throughout
training; only the linear readout is optimized.

\begin{table*}[!ht]
\caption{Evaluation design. All splits are chronological. The listed
length is the number of input-target pairs before the 55/20/25\% split;
each reported reservoir result uses ten aligned task and initialization
seeds.}
\label{tab:protocol}
\centering\footnotesize\setlength{\tabcolsep}{5pt}
\begin{tabular}{lcccccl}
\toprule
Task & Input dim. & Length & Horizon/delay & Output & Feature map &
Primary metric\\
\midrule
Linear memory & 1 & 5000 & $\tau=1{:}150$ & continuous & linear &
$\sum_\tau\mathrm{Corr}^2$\\
NARMA-10 & 1 & 6000 & one step & continuous & linear & NRMSE\\
Bounded NARMA-20 & 1 & 6000 & one step & continuous & linear & NRMSE\\
Mackey-Glass-17 & 1 & 6000 & $k=1,12,84$ & continuous & linear & NRMSE\\
Lorenz-63 & 1 & 6000 & $k=50$ & continuous & linear & NRMSE\\
Delayed XOR & 2 & 8000 & $\tau=10$ & binary & linear+square & accuracy\\
UCI Air Quality & 8 & 9357 & same time & continuous & linear & NRMSE\\
\bottomrule
\end{tabular}
\end{table*}

\subsection{Models and Validation}

The recurrent models are a standard dense ESN~\cite{jaeger2001}, a
correctly implemented leaky ESN~\cite{jaeger2007leaky}, a two-layer Deep
ESN~\cite{gallicchio2017}, a frozen orthogonal reservoir,
CRJ~\cite{rodan2012}, and the proposed model. All expose 300 state features
to the readout. The Deep ESN uses two 150-unit layers and concatenates
their states. NG-RC uses linear and second-order monomials of delayed inputs
under the same 300-feature ceiling. The compact GRU uses 32 hidden units on
the two regression comparisons and 64 on XOR.

For clarity, the standard and leaky ESNs use
\begin{align}
\widetilde z_{t+1}&=\tanh(Wz_t+\Win x_t+b), \nonumber\\
z_{t+1}&=(1-\alpha)z_t+\alpha\widetilde z_{t+1},
\label{eq:leaky}
\end{align}
with $\alpha=1$ for the standard ESN.\@ The dense Gaussian $W$ is rescaled
once to the selected spectral radius; the leak is not folded into $W$ and
then applied a second time. The orthogonal baseline uses $W=\rho Q$.
CRJ uses a directed cycle with regularly spaced bidirectional jumps,
followed by spectral-radius rescaling. Deep ESN updates two 150-unit leaky
layers sequentially and concatenates both layer states, so its readout
feature count remains 300.

For NG-RC, let
\begin{equation}
v_t=[x_t^\top,x_{t-1}^\top,\ldots,x_{t-L+1}^\top]^\top.
\end{equation}
Its fixed feature vector contains $v_t$ and the upper-triangular products
$v_{t,i}v_{t,j}$ for $i\leq j$. If $p=Ld_x$, the resulting dimension is
$p+p(p+1)/2$. We clip $L$ so this number does not exceed 300. This makes
the comparison one of matched features rather than matched internal
operations: NG-RC has no recurrent state transition.

Hyperparameters are selected once for each task and model on a dedicated
validation realization and then held fixed for ten evaluation seeds. This
avoids choosing a different configuration for every reporting seed.
The chronological split is 55\% training, 20\% validation, and 25\% test,
with a 100-step washout for reservoirs. The validation grids are
$\rho\in\{0.985,0.995\}$, input gain in $\{1.0,1.5\}$, leak in
$\{0.5,1.0\}$, $\omega_{\max}\in\{2,4\}$,
$\eta\in\{0.003,0.01,0.03\}$, and
$\lambda\in\{10^{-8},10^{-6},10^{-4},10^{-2}\}$. NG-RC delay depths are
$\{5,10,15,20\}$, clipped if necessary to respect the feature budget.
The selected settings and all seed-level values are included with the
reproducibility scripts.

The validation realization is not reused as a reporting seed. At each of
the ten evaluation indices, every model receives the same freshly
generated task sequence and its own deterministically paired reservoir
initialization. Thus a paired comparison changes the architecture while
holding the data realization and seed index aligned. After model selection,
the training and validation portions are combined to fit the final
readout; the test portion remains untouched. Input and state normalization
statistics are fitted only from samples available to the corresponding
fitting stage and never from the test segment. For memory capacity, the
validation objective is a fixed
10-step recall proxy rather than the reported sum over 150 delays.

The GRU uses chronological windows and normalization fitted only to the
training segment. For bounded NARMA-20 and Mackey-Glass it has 32 units,
100-step windows, Adam learning rate $3\times10^{-3}$, at most 80 epochs,
and early-stopping patience 10. For XOR it has 64 units, 20-step windows,
learning rate $10^{-2}$, and 100 epochs. These are compact trained
references, not an exhaustive neural architecture search.

\begin{table*}[t]
\caption{MC stands for linear memory capacity. N10 and BN20 denote standard NARMA-10 and bounded NARMA-20; MG12/MG84 are Mackey-Glass horizons; L63 is Lorenz-63 at 50 steps. Higher is better for MC and XOR; lower is better otherwise. Best values are bold. The GRU is an end-to-end-trained reference and is reported only on its three prespecified tasks.}
\label{tab:performance}
\centering\scriptsize\setlength{\tabcolsep}{3.0pt}
\begin{tabular}{lrrrrrrr}
\toprule
Model & MC & N10 & BN20 & MG12 & MG84 & L63 & XOR (\%) \\
\midrule
ESN & 14.98$\pm$0.42 & 0.299$\pm$0.026 & 0.494$\pm$0.039 & 0.031$\pm$0.007 & 0.204$\pm$0.021 & 0.661$\pm$0.227 & 51.85$\pm$1.00 \\
Leaky ESN & 14.98$\pm$0.42 & 0.299$\pm$0.026 & 0.494$\pm$0.039 & \textbf{0.002$\pm$0.000} & \textbf{0.088$\pm$0.012} & 0.661$\pm$0.227 & 64.71$\pm$1.11 \\
Deep ESN & 14.64$\pm$0.65 & 0.331$\pm$0.025 & 0.536$\pm$0.033 & 0.003$\pm$0.000 & 0.093$\pm$0.016 & 0.860$\pm$0.480 & 62.59$\pm$1.11 \\
NG-RC & 14.11$\pm$0.03 & \textbf{0.172$\pm$0.016} & 0.785$\pm$0.006 & 0.082$\pm$0.004 & 0.535$\pm$0.022 & 1.017$\pm$0.055 & \textbf{100.00$\pm$0.00} \\
Orthogonal & \textbf{17.73$\pm$0.51} & 0.346$\pm$0.017 & 0.442$\pm$0.020 & 0.037$\pm$0.005 & 0.224$\pm$0.023 & 0.950$\pm$0.791 & 53.77$\pm$1.10 \\
CRJ & 10.81$\pm$0.14 & 0.386$\pm$0.023 & 0.614$\pm$0.032 & 0.161$\pm$0.011 & 0.395$\pm$0.015 & 0.834$\pm$0.316 & 49.97$\pm$0.86 \\
\textbf{Lindblad-inspired} & 17.28$\pm$1.22 & 0.296$\pm$0.039 & \textbf{0.419$\pm$0.025} & 0.027$\pm$0.006 & 0.174$\pm$0.028 & \textbf{0.574$\pm$0.113} & 55.75$\pm$3.68 \\
Compact GRU & -- & -- & 0.643$\pm$0.050 & -- & 0.167$\pm$0.026 & -- & \textbf{100.00$\pm$0.00} \\
\bottomrule
\end{tabular}
\end{table*}

\begin{table*}[t]
\caption{Holm-adjusted two-sided Wilcoxon signed-rank p-values comparing the proposed model with each reservoir baseline. Tests use ten aligned evaluation indices, with the same task realization used across models at each index. Holm correction is applied across the six baseline comparisons within each task.}
\label{tab:significance}
\centering\footnotesize\setlength{\tabcolsep}{6pt}
\begin{tabular}{lrrrrrr}
\toprule
Task & ESN & Leaky ESN & Deep ESN & NG-RC & Orthogonal & CRJ \\
\midrule
MC & \textbf{0.0117} & \textbf{0.0117} & \textbf{0.0117} & \textbf{0.0117} & 0.3750 & \textbf{0.0117} \\
N10 & 1.0000 & 1.0000 & 0.1113 & \textbf{0.0117} & \textbf{0.0117} & \textbf{0.0117} \\
BN20 & \textbf{0.0117} & \textbf{0.0117} & \textbf{0.0117} & \textbf{0.0117} & \textbf{0.0371} & \textbf{0.0117} \\
MG-12 & 0.4316 & \textbf{0.0117} & \textbf{0.0117} & \textbf{0.0117} & \textbf{0.0117} & \textbf{0.0117} \\
MG-84 & \textbf{0.0117} & \textbf{0.0117} & \textbf{0.0117} & \textbf{0.0117} & \textbf{0.0117} & \textbf{0.0117} \\
L63 & 0.6445 & 0.6445 & 0.5801 & \textbf{0.0117} & 0.2578 & \textbf{0.0117} \\
XOR-10 & \textbf{0.0156} & \textbf{0.0156} & \textbf{0.0156} & \textbf{0.0117} & 0.2324 & \textbf{0.0117} \\
\bottomrule
\multicolumn{7}{l}{\scriptsize Bold: $p_{\rm Holm}<0.05$; direction of the difference is given by Table~\ref{tab:performance}.}\\
\end{tabular}
\end{table*}

\subsection{Tasks and Metrics}

Linear memory capacity uses 150 delays and
$MC_\tau=\operatorname{Corr}^2(x_{t-\tau},\hat x_{t-\tau})$ with
$MC=\sum_\tau MC_\tau$~\cite{jaeger2001}. Standard NARMA-10 follows the
recurrence~\cite{atiya2000narma}
\begin{equation}
y_{t+1}=0.3y_t+0.05y_t\sum_{i=t-9}^{t}y_i
+1.5u_{t-9}u_t+0.1,
\label{eq:n10}
\end{equation}
where $u_t\sim\mathcal U[0,0.5]$. Because unsaturated order-20 variants can
diverge under commonly used coefficients, the longer task is explicitly
defined as \emph{bounded NARMA-20}:
\begin{equation}
\begin{aligned}
y_{t+1}=\tanh\!\bigg(&0.3y_t+0.05y_t\sum_{i=t-19}^t y_i\\
&+1.5u_{t-19}u_t+0.01\bigg).
\end{aligned}
\label{eq:bn20}
\end{equation}
The bounded designation is retained throughout; it is not presented as the
unsaturated canonical recurrence.

Mackey-Glass with delay 17~\cite{mackeyglass1977} is generated from
\begin{equation}
\dot x(t)=\frac{0.2x(t-17)}{1+x(t-17)^{10}}-0.1x(t)
\label{eq:mg}
\end{equation}
using a unit integration step and 1000 discarded transient steps. Direct
targets are $x_{t+k}$ for $k\in\{1,12,84\}$; predictions are not recursively
rolled out. Lorenz-63~\cite{lorenz1963} uses
\begin{equation}
\dot x=10(y-x),\quad
\dot y=x(28-z)-y,\quad
\dot z=xy-\tfrac83z,
\label{eq:lorenz}
\end{equation}
integrated by fourth-order Runge-Kutta with step 0.02 after a 2000-step
transient. Its $x$ component is predicted 50 steps ahead. For delayed XOR,
independent Bernoulli streams $a_t,b_t$ are mapped to $\{-1,1\}$ inputs and
the target is
\begin{equation}
y_t=a_{t-10}\operatorname{~XOR~}b_t.
\label{eq:xor}
\end{equation}
This task probes the combination of a delayed memory and a multiplicative
logical interaction.

For an external-validity check, we use the UCI Air Quality
series~\cite{vito2008airquality}, which contains hourly responses from five
metal-oxide sensors and reference gas measurements collected in an Italian
city. The task estimates reference CO concentration from the five sensor
responses plus temperature, relative humidity, and absolute humidity. The
provided missing-value marker $-200$ is converted to missing data. Input
channels are filled only from the most recent past observation, while
hours without a valid reference CO value are excluded from readout fitting
and scoring. The split remains chronological, and scaling uses the training
portion only. This is temporal sensor calibration under drift, not a
future-forecasting claim. It is worth mentioning that recent application-oriented developments have combined multilayer reservoir
architectures, adaptive plasticity, and feature decomposition for air-quality
forecasting \cite{xu2026enhanced}. Such approaches optimize the reservoir
for a specific forecasting domain, whereas the present work studies a
closed-form recurrent construction with analytically separated rotation,
decay, and stability controls.

Regression uses
$\mathrm{NRMSE}=\sqrt{\mathrm{MSE}/\operatorname{Var}(y)}$; XOR uses
accuracy. Effective rank is the exponentiated entropy of the normalized
singular values of the centered state matrix~\cite{roy2007}. We report mean and population standard deviation over ten seeds. Pairwise
reservoir comparisons use two-sided Wilcoxon signed-rank tests on aligned
seed indices, with Holm correction within each task. NG-RC is included in the paired tests because, at each seed index, it is evaluated on the same task realization as the reservoir models. The GRU is reported only as a separately trained reference and is excluded from the paired statistical tests.
Population rather than sample standard deviation is used because the ten
seeds are the complete prespecified evaluation set. No test result is used
to retune a model. Evaluation design is presented in Table~\ref{tab:protocol}.

\section{Results}
\label{sec:results}

\subsection{Performance Across Tasks}

The results answer the evaluation questions in the same order as the
protocol: predictive scope first, then external calibration and cost, and
finally the rotation-dissipation mechanism. Table~\ref{tab:performance}
shows a task-dependent ordering rather than a universal winner. On memory
capacity, the proposed model reaches
$17.28\pm1.22$, statistically tied with the orthogonal reservoir's
$17.73\pm0.51$ (Holm-adjusted $p=0.375$) and above the ESN, leaky ESN,
Deep ESN, NG-RC, and CRJ ($p_{\rm Holm}=0.0117$ for each). This is an
approximately 15\% increase over the standard ESN and is consistent with
Fig.~\ref{fig:benchmarks}(a): rotation-dominated modes preserve a longer
linear-memory tail, while CRJ decays earliest.

On standard NARMA-10, NG-RC is best at $0.172\pm0.016$ because its selected
20-delay polynomial features directly represent the required interactions.
The proposed model obtains $0.296\pm0.039$, essentially the same as the
ESN's $0.299\pm0.026$ ($p_{\rm Holm}=1.0$). Its advantage over Deep ESN
also does not survive correction ($p_{\rm Holm}=0.111$), whereas its lower
error relative to the orthogonal and CRJ baselines is significant
($p_{\rm Holm}=0.0117$). The result therefore supports parity
with a standard ESN on NARMA-10, which need not be considered as superiority.

For bounded NARMA-20, the proposed reservoir
achieves $0.419\pm0.025$, lower than every other reservoir. The improvement
is significant against ESN, leaky ESN, Deep ESN, NG-RC, and CRJ
($p_{\rm Holm}=0.0117$) and against the orthogonal reservoir
($p_{\rm Holm}=0.0371$). Relative to the strongest competing reservoir,
orthogonal RC at $0.442\pm0.020$, the mean NRMSE is lower by 5.2\%.
NG-RC's validation-selected five-delay representation is too short for this
recurrence, while the compact GRU at $0.643\pm0.050$ does not recover the
long nonlinear dependence under its prespecified small configuration.

\begin{figure*}[t]
\centering
\includegraphics[width=0.98\textwidth]{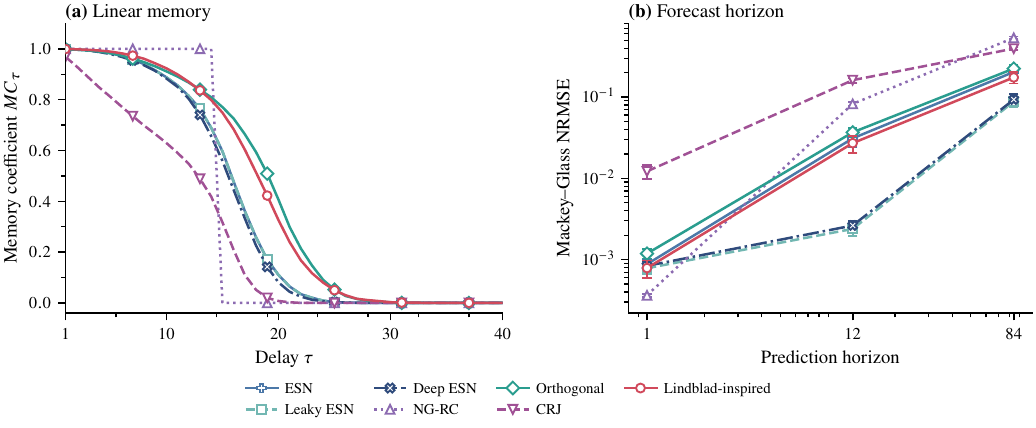}
\caption{Benchmark diagnostics. (a) Delay-wise linear memory. The proposed
model and the orthogonal reservoir retain the longest recurrent memory
tails; NG-RC retains inputs explicitly over its selected finite window.
(b) Mackey-Glass error versus forecast horizon (log scales). Leaky and
Deep ESNs are strongly matched to this smooth signal, while the proposed
model ensemble is intermediate.}
\label{fig:benchmarks}
\end{figure*}

\begin{figure*}[!ht]
\centering
\includegraphics[width=0.98\textwidth]{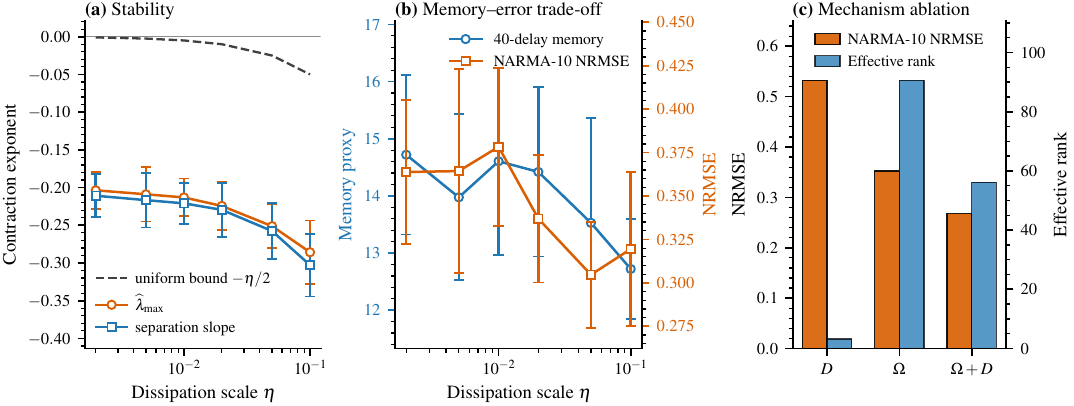}
\caption{Mechanism diagnostics. (a) Conditional Lyapunov exponent,
same-reservoir trajectory-separation slope, and the uniform ensemble bound
$-\eta/2$. (b) Forty-delay memory and NARMA-10 error (c) Mechanism ablation on NARMA-10.}
\label{fig:mechanism}
\end{figure*}

The performance of the proposed model for Mackey-Glass is not the best. At $k=12$ and $84$, the
leaky ESN obtains NRMSE $0.002$ and $0.088$, and the Deep ESN obtains $0.003$
and $0.093$, whereas the proposed model obtains $0.027$ and $0.174$.
The proposed model still improves on the standard ESN at $k=84$
($p_{\rm Holm}=0.0117$), but loses significantly to both leaky variants at
both reported nontrivial horizons. A tunable
leak acts as a low-pass temporal filter well suited to this smooth
delay-differential signal. The proposed ensemble distributes its modes
over a broader range of frequencies.

On Lorenz-63, the proposed model has the lowest mean NRMSE,
$0.574\pm0.113$, compared with $0.661\pm0.227$ for ESN and leaky ESN.\@
The difference is not significant after Holm correction
($p_{\rm Holm}=0.645$), although the improvement over CRJ is significant
($p_{\rm Holm}=0.0117$). We therefore describe the model as competitive,
not generally superior. The smaller standard deviation is encouraging but
is not, by itself, a robustness guarantee. On
delayed XOR, NG-RC and the GRU reach 100\% by exposing or learning the
required multiplicative interaction. The proposed linear-plus-square
readout reaches only $55.75\%$ and is below the leaky and Deep ESNs. This
result prevents an overly broad claim that damped rotational dynamics are
optimal for every nonlinear memory operation.

Table~\ref{tab:significance} makes the statistical scope explicit. A small
$p$-value does not imply that the proposed model is better: direction must
be read together with Table~\ref{tab:performance} as described previously. The
differences from NG-RC on NARMA-10 and XOR and from leaky/Deep ESN on
Mackey-Glass are significant in favor of those baselines. Conversely,
bounded NARMA-20 is the only predictive task on which the proposed model is
significantly better than every reservoir comparator. Memory capacity is a
tie with the strongest orthogonal baseline, and Lorenz-63 remains
inconclusive against the dense, leaky, deep, and orthogonal alternatives. However, in all cases, the proposed model remains competitive with other benchmark scenarios.

\subsection{Real-Sensor Calibration}

\begin{table}[t]
\caption{UCI Air Quality temporal CO calibration from five metal-oxide sensors and three weather variables.}
\label{tab:airquality}
\centering\footnotesize\setlength{\tabcolsep}{4pt}
\begin{tabular}{lrr}
\toprule
Model & NRMSE & $p_{\rm Holm}$ \\
\midrule
Training-mean predictor & 1.000 & -- \\
Static ridge & 0.531 & -- \\
ESN & 0.607$\pm$0.018 & 1.0000 \\
Leaky ESN & 0.607$\pm$0.018 & 1.0000 \\
Deep ESN & 0.675$\pm$0.049 & 0.0195 \\
NG-RC & 0.625$\pm$0.000 & 0.0117 \\
Orthogonal & 0.615$\pm$0.024 & 0.6973 \\
CRJ & 0.667$\pm$0.026 & 0.0195 \\
\textbf{Lindblad-inspired} & 0.603$\pm$0.016 & -- \\
\bottomrule
\end{tabular}
\end{table}

Table~\ref{tab:airquality} adds a deliberately different applied check with air quality data.
The proposed reservoir obtains $0.603\pm0.016$, the lowest mean NRMSE among
the recurrent and NG-RC models. It is statistically indistinguishable from
the ESN, leaky ESN, and orthogonal reservoir after correction, but is lower
than Deep ESN and CRJ.\@ More importantly, static ridge regression on the
current sensor vector obtains 0.531 and outperforms every other temporal model.

\subsection{Performance-Training-Cost Trade-off}

\begin{table}[t]
\caption{Training-cost comparison on bounded NARMA-20. Reservoir time includes construction, state collection, and ridge fit; GRU time is end-to-end optimization. Medians use five timing runs for reservoirs and ten training seeds for the GRU.}
\label{tab:cost}
\centering\footnotesize\setlength{\tabcolsep}{3pt}
\begin{tabular}{lrrc}
\toprule
Model & Trainable & Time (s) & BPTT \\
\midrule
ESN & 301 & 0.16 & No \\
Leaky ESN & 301 & 0.17 & No \\
Deep ESN & 301 & 0.20 & No \\
NG-RC & 21 & 0.03 & No \\
Orthogonal & 301 & 0.10 & No \\
CRJ & 301 & 0.15 & No \\
Lindblad-inspired & 301 & 0.16 & No \\
Compact GRU & 3393 & 52.31 & Yes \\
\bottomrule
\end{tabular}
\end{table}

Table~\ref{tab:cost} compares the computational cost of all models using bounded NARMA-20 as a common benchmark. The evaluation was done on an Apple Silicon M2-powered MacBook Pro laptop. Reservoir time includes matrix construction, state collection,
and ridge fitting, while excluding the one-time validation search for all
models. The proposed framework completes the full pipeline in a median
0.16~s and trains only 301 readout parameters. By comparison, the 32-unit
GRU optimizes 3393 parameters through BPTT and requires 52.31~s on the same
machine, corresponding to an approximately 317-fold increase in training
time. Although absolute timings depend on hardware and implementation, this
large gap highlights the computational competitiveness of the proposed
fixed-dynamics approach: it achieves task-relevant performance with a
lightweight convex readout fit and no recurrent-weight optimization.

This efficiency is obtained without making a universal accuracy claim. The
GRU and NG-RC remain stronger on delayed XOR, and the GRU is marginally
better at Mackey-Glass $k=84$. In contrast, the proposed framework is more
accurate on bounded NARMA-20 while retaining a substantially lower training
cost. The resulting comparison therefore places the method at a favorable
accuracy-efficiency operating point rather than as a uniformly dominant
predictor.

\subsection{Mechanism and Stability}

The convergence experiment propagates 15 distinct initial conditions through
the same fixed reservoir under a common input sequence. Any reduction in
trajectory separation therefore reflects forgetting of the initial state,
rather than differences in the recurrent matrix. Figure~\ref{fig:mechanism}(a)
summarizes the dissipation sweep. Over
$\eta\in[0.002,0.1]$, the Jacobian-product exponent becomes more negative,
decreasing from $-0.204$ to $-0.286$, and closely tracks the directly fitted
trajectory-separation slope. Both empirical contraction measures lie below
the conservative uniform bound $-\eta/2$, consistent with additional
contraction induced by activation saturation. The conditional-exponent and state-separation estimates provide consistent
empirical evidence of driven-state contraction; a brief discussion is given in
Appendix~\ref{app:conditional_stability}.

Increasing dissipation weakens retained memory: the 40-delay memory proxy
decreases from $14.72$ to $12.72$. Prediction performance, however, is
nonmonotonic. The NARMA-10 NRMSE decreases from $0.364$ at
$\eta=0.002$ to $0.305$ at $\eta=0.05$, before increasing to $0.319$ at
$\eta=0.1$. Figure~\ref{fig:mechanism}(b) therefore reveals a trade-off
between stability margin, memory retention, and nonlinear prediction,
rather than a monotonic benefit from increasing dissipation.

The ablation in Fig.~\ref{fig:mechanism}(c) clarifies the distinct roles of
rotation and dissipation. The $\Omega$-only model produces the richest state
representation, with the highest effective rank
($90.66\pm8.42$), but its NARMA-10 error remains
$0.352\pm0.029$. In contrast, the $D$-only model is stable but nearly
degenerate, with effective rank $3.23\pm0.27$ and NRMSE
$0.533\pm0.014$. Combining the two yields an intermediate effective rank
of $56.17\pm3.60$ and the lowest error, $0.268\pm0.023$. Thus,
$\Omega+D$ does not maximize state-space rank. Instead, controlled
dissipation regularizes the highly diverse rotational features into a more
predictive fading-memory representation. For the present example, the combined $\Omega+D$ dynamics are preferable:
rotation supplies a diverse transient representation, while dissipation
suppresses irrelevant history and improves predictive conditioning.

\section{Discussion}
\label{sec:discussion}

\subsection{What the Results Establish}

The theoretical and empirical results support two complementary conclusions.
At the architectural level, the contribution is precise and general. Each
recurrent mode is parameterized by an explicit rotation angle and decay rate,
so phase mixing and forgetting are controlled separately. Because the global
recurrent operator remains normal, its singular values are available in
closed form, and the smallest modal decay rate yields an input-uniform
echo-state-property margin before any task-specific training is performed.
This provides a more transparent stability design than post-hoc
spectral-radius rescaling of a generic non-normal matrix, where the rescaling
parameter does not directly identify the underlying forgetting timescale.

At the predictive level, the evidence is deliberately more qualified. The
proposed reservoir provides a competitive operating point across linear
memory, nonlinear temporal dependence, and chaotic forecasting, with its
clearest advantage on bounded NARMA-20. Its strengths arise from combining
broad phase diversity with controlled contraction, rather than from imposing
a narrow task-specific structure. The same generality can become a
disadvantage when a stronger inductive bias is available: leaky reservoirs
are better matched to the smooth low-pass structure of Mackey-Glass,
NG-RC directly exposes finite-delay polynomial interactions, and an
end-to-end-trained GRU can learn the delayed XOR relation. The proposed
architecture should therefore not be interpreted as a universally superior
reservoir. Its contribution is a stable, analyzable, and computationally
efficient recurrent design whose memory and mixing properties can be
reasoned about mode by mode, and whose accuracy-stability trade-off can be
controlled through physically interpretable parameters.

\subsection{Mechanistic Interpretation}

The stability result establishes that initial-state differences are
forgotten under the stated decay condition, but it does not determine which
forgetting rate is best for a given task. The ablation clarifies this
computational role. The rotation-only model generates a high-dimensional
state representation, but without radial decay it can preserve irrelevant
history. The decay-only model forgets reliably but produces a low-rank
representation. Their combination lowers effective rank relative to pure
rotation while improving NARMA prediction. Useful state geometry therefore
requires neither maximal rank nor maximal contraction, but an appropriate
balance between separating relevant input histories and suppressing
differences beyond the target's effective memory.

The dissipation sweep makes this balance explicit. The guaranteed rate
$-\eta/2$ varies monotonically with $\eta$, whereas prediction error is
nonmonotonic because the formal bound does not account for the input
distribution, activation saturation, or task structure. In practice,
$\eta$ should be interpreted first as a control on the stability margin and
then selected by validation for the target problem. The parameter
$\omega_{\max}$ separately controls phase diversity. Delayed XOR instead
favours weaker decay, consistent with the need to retain a specific delayed
bit.

The multi-timescale organization is another important construction worth highlighting. It adds a second level of structure beyond the
within-member separation of rotation and decay. By combining members with
different decay profiles, the reservoir exposes a spectrum of effective
memory horizons to the same readout. Faster members emphasize recent
transients, whereas slower members retain information over longer intervals.
The resulting feature bank therefore supports tasks whose relevant temporal
dependencies are not known a priori and may occur at more than one scale.
Importantly, this benefit is obtained without training the recurrent
operators or sacrificing the member-wise stability guarantee.

\subsection{When to Use the Architecture}

The results indicate several practical regimes. When a task requires
substantial recurrent memory and nonlinear temporal processing, but
training the recurrent weights is undesirable or infeasible, a spectrum of
damped rotational modes provides a reasonable fixed-dynamics option. The
memory-capacity and bounded-NARMA results support this use case.

When the target is smooth and predominantly low-pass, a conventional leaky
ESN may be both simpler and more accurate, as observed for Mackey-Glass.
When the relevant interaction is known to lie within a short finite delay
window, explicit NG-RC features can be more compact and more effective. The
delayed-XOR result illustrates this case. The air-quality experiment further
shows that recurrence may be unnecessary when the target is explained
mainly by an instantaneous cross-sectional mapping.

The GRU comparison should be interpreted in the same way. The proposed
model is not intended to replace task-specific recurrent optimization. It
offers a different operating point: a fixed and stable state transition, a
convex readout fit, and a stability margin known before supervised training.
This can be advantageous when training cost, reproducibility, or physical
implementation is important. Conversely, when sufficient labeled data and
optimization resources are available, a trained gated model may learn
interactions that are not exposed by a fixed reservoir with a linear
readout.

\subsection{Relation to Lindblad Dynamics}

The construction uses the exact exponential of damped rotational
generators, preserving the continuous-time motivation without introducing a
discretization approximation. The model remains, however,
Lindblad-inspired rather than a realization of a physical GKSL channel.
Physical GKSL generators satisfy additional algebraic constraints associated
with complete positivity and trace preservation. We do not establish that
the orthogonally mixed blocks in \eqref{eq:wrec} satisfy these constraints
for any specific quantum system. Such conditions would be necessary for a
claim of generator-faithful quantum simulation or direct quantum-hardware
realization, but they are not required for the classical
structured-reservoir problem considered here.

The analogy is nevertheless operational. The generator parameters remain
directly identifiable after discretization as pole angles and radii, and
the exact exponential makes the stability margin traceable to the decay
rates. At the same time, density-matrix positivity, trace preservation,
measurement back-action, and Hilbert-space scaling are not part of the
algorithm. The appropriate description is therefore a classical normal
reservoir motivated by a rotation-dissipation decomposition.

\section{Conclusion}
\label{sec:conclusion}

This work develops a classical structured reservoir from an explicit
rotation-dissipation decomposition. The multi-timescale construction combines reservoir members with distinct decay rates, enabling the readout to access both short-lived transients and longer temporal dependencies within a single feature representation. Exact exponentiation assigns each mode
an independently interpretable angle and decay rate, while orthogonal
similarity mixing preserves the modal spectrum and normality. Positive decay
therefore yields the input-uniform bound
$\|\Wrec\|_2\leq e^{-\underline{\gamma}}<1$, linking the recurrent
parameterization directly to both a characteristic memory timescale and a
formal echo-state margin without post-hoc spectral rescaling.

The empirical results support a task-dependent rather than universal
advantage. The proposed six-member reservoir is competitive in linear
memory and Lorenz prediction, achieves the best bounded NARMA-20 performance
among the fixed-reservoir baselines, and retains the low training cost of a
convex readout fit. At the same time, leaky and Deep ESNs are better matched
to smooth Mackey-Glass dynamics, NG-RC and the GRU are stronger on delayed
XOR, and static ridge regression is preferable when the target is primarily
instantaneous, as in the air-quality calibration task. The mechanism
analysis further shows that the useful regime is not obtained by maximizing
effective rank or contraction alone, but by combining rotational diversity
with sufficient dissipative forgetting.

The main contribution is therefore a stable and interpretable recurrent
design whose mixing and memory properties can be controlled mode by mode.
A natural next step is to retain this separation while adapting the decay
spectrum to data and, separately, to investigate generator constructions
that satisfy the constraints of physically realizable open-system dynamics.

\section*{Acknowledgment}
During the preparation of this manuscript, the authors used ChatGPT (GPT-5.5,
OpenAI) and Grammarly to assist with language editing, sentence refinement, and
clarity improvements. All scientific content, formal arguments, mathematical
constructions, and theoretical claims were developed entirely by the authors.
The authors reviewed and take full responsibility for all content in this
manuscript.

\appendices
\section{Additional Stability and Geometry Details}
\label{app:stability}

\subsection{Conditional Exponent and Separation Estimate}
\label{app:conditional_stability}

Starting from a unit tangent vector $v_0$, the largest conditional exponent
is estimated along the driven trajectory through repeated Jacobian
propagation and renormalization:
\begin{align}
\widetilde v_{t+1}&=J_t v_t,\qquad
v_{t+1}=\frac{\widetilde v_{t+1}}
{\|\widetilde v_{t+1}\|_2},\\
\widehat\lambda_{\max}
&=\frac{1}{T-T_{\rm w}}
\sum_{t=T_{\rm w}}^{T-1}
\log \|\widetilde v_{t+1}\|_2.
\label{eq:lyap_estimator}
\end{align}
Each ensemble member is propagated separately, and the largest member-wise
estimate is reported. Because the estimator follows ordered products of
trajectory Jacobians, it accounts for changes in their dominant singular
directions over time.

As an independent empirical check, two random initial states are evolved
through the same fixed reservoir under the same input sequence. Defining
$d_t=\|z_t-z'_t\|_2$, a linear fit is applied to $\log d_t$ over the initial
decay regime, restricted to $d_t>10^{-10}$ and the first 300 steps. The
negative fitted slope is reported as the separation rate. Its agreement with
$-\widehat\lambda_{\max}$ in Fig.~\ref{fig:mechanism}(a) provides a numerical
consistency check rather than an additional proof of the echo-state
property.

\subsection{Complexity and Timing Decomposition}

\begin{table*}[!ht]
\caption{Bounded-NARMA-20 timing decomposition in milliseconds. Reservoir entries are medians over five repetitions. A dash indicates that the end-to-end GRU timing is not separable into fixed-state collection and closed-form readout fitting.}
\label{tab:costbreakdown}
\centering\footnotesize\setlength{\tabcolsep}{7pt}
\begin{tabular}{lrrrrr}
\toprule
Model & Construction & State/features & Readout & Total & Trainable \\
\midrule
ESN & 39.1 & 108.6 & 9.8 & 159.2 & 301 \\
Leaky ESN & 38.2 & 117.1 & 9.5 & 165.6 & 301 \\
Deep ESN & 15.4 & 169.4 & 11.0 & 204.0 & 301 \\
NG-RC & 0.1 & 28.7 & 0.6 & 30.7 & 21 \\
Orthogonal & 4.3 & 86.3 & 10.5 & 102.2 & 301 \\
CRJ & 55.8 & 81.7 & 10.1 & 150.6 & 301 \\
Lindblad-inspired & 3.6 & 154.6 & 9.9 & 164.9 & 301 \\
Compact GRU & -- & -- & -- & 52314.3 & 3393 \\
\bottomrule
\end{tabular}
\end{table*}

Table~\ref{tab:costbreakdown} separates one-time construction, feature
generation, and readout fitting. The proposed state collection is slower
than the orthogonal and CRJ implementations in this CPU code, despite its
six-member block structure, because it performs six small matrix-vector
updates and concatenations at every step. Its construction is inexpensive:
the six QR factorizations are on $50\times50$ matrices rather than one
$300\times300$ matrix. These constants explain why the asymptotic
$O(d_z^2/M)$ recurrent structure does not translate into an identical
factor-$M$ wall-clock improvement at $d_z=300$.

For a scalar regression target, every 300-state reservoir fits 300
coefficients and one intercept. On bounded NARMA-20, validation selects
$L=5$ for NG-RC, giving five linear features and
$5(5+1)/2=15$ quadratic features; its ridge readout therefore has only 21
trainable parameters including the intercept. Its poor BN20 result is not
caused by expensive optimization but by a validation-selected finite window
that does not represent the full recurrence.

For a one-layer GRU with input dimension $d_x$, hidden size $h$, and a
scalar output, the implementation has
\begin{equation}
N_{\rm GRU}=3hd_x+3h^2+6h+h+1
\label{eq:gru_params}
\end{equation}
parameters, where the $6h$ term accounts for the two bias vectors used by
each of the three gates. With $d_x=1$ and $h=32$, this gives 3393, matching
Table~\ref{tab:costbreakdown}. The timing comparison includes early-stopped
Adam optimization for the GRU but excludes the one-time hyperparameter
search for every model. It therefore measures the cost of fitting an
already selected model, not the total cost of model development.

\subsection{Effective Rank}

For centered training states with singular values $s_i$, we define
$p_i=s_i/\sum_j s_j$. The reported effective rank is
\begin{equation}
r_{\rm eff}=\exp\!\left(-\sum_i p_i\log p_i\right).
\label{eq:effective_rank}
\end{equation}
Unlike algebraic rank, $r_{\rm eff}$ decreases when most state energy is
concentrated in a few singular directions. It is consequently useful for
the ablation, but it is not a performance objective: the rotation-only
variant demonstrates that a larger $r_{\rm eff}$ can coexist with worse
prediction.

\bibliographystyle{IEEEtran}
\bibliography{refs}

@techreport{jaeger2001,
  author = {Jaeger, Herbert},
  title  = {The "Echo State" Approach to Analysing and Training Recurrent Neural Networks},
  number = {GMD Report 148},
  institution = {German National Research Center for Information Technology},
  year   = {2001}
}

@article{maass2002,
  author  = {Maass, Wolfgang and Natschl{\"a}ger, Thomas and Markram, Henry},
  title   = {Real-time Computing without Stable States: A New Framework for Neural Computation Based on Perturbations},
  journal = {Neural Computation},
  volume  = {14},
  number  = {11},
  pages   = {2531--2560},
  year    = {2002}
}

@article{tanaka2019,
  author  = {Tanaka, Gouhei and Yamane, Toshiyuki and H{\'e}roux, Jean Benoit and Nakane, Ryosho and Kanazawa, Naoki and Takeda, Seiji and Numata, Hidetoshi and Nakano, Daiju and Hirose, Akira},
  title   = {Recent Advances in Physical Reservoir Computing: A Review},
  journal = {Neural Networks},
  volume  = {115},
  pages   = {100--123},
  year    = {2019}
}

@article{manjunath2013,
  author  = {Manjunath, G. and Jaeger, Herbert},
  title   = {Echo State Property Linked to an Input: Exploring a Fundamental Characteristic of Recurrent Neural Networks},
  journal = {Neural Computation},
  volume  = {25},
  number  = {3},
  pages   = {671--696},
  year    = {2013}
}

@article{boyd1985,
  author  = {Boyd, Stephen and Chua, Leon O.},
  title   = {Fading Memory and the Problem of Approximating Nonlinear Operators with {Volterra} Series},
  journal = {IEEE Transactions on Circuits and Systems},
  volume  = {32},
  number  = {11},
  pages   = {1150--1161},
  year    = {1985}
}

@article{grigoryeva2018,
  author  = {Grigoryeva, Lyudmila and Ortega, Juan-Pablo},
  title   = {Echo State Networks are Universal},
  journal = {Neural Networks},
  volume  = {108},
  pages   = {495--508},
  year    = {2018}
}

@article{rodan2011,
  author  = {Rodan, Ali and Ti\v{n}o, Peter},
  title   = {Minimum Complexity Echo State Network},
  journal = {IEEE Transactions on Neural Networks},
  volume  = {22},
  number  = {1},
  pages   = {131--144},
  year    = {2011}
}

@article{rodan2012,
  author  = {Rodan, Ali and Tino, Peter},
  title   = {Simple Deterministically Constructed Cycle Reservoirs with Regular Jumps},
  journal = {Neural Computation},
  volume  = {24},
  number  = {7},
  pages   = {1822--1852},
  year    = {2012}
}

@inproceedings{saxe2014,
  author    = {Saxe, Andrew M. and McClelland, James L. and Ganguli, Surya},
  title     = {Exact Solutions to the Nonlinear Dynamics of Learning in Deep Linear Neural Networks},
  booktitle = {International Conference on Learning Representations (ICLR)},
  year      = {2014}
}

@inproceedings{arjovsky2016,
  title={Unitary evolution recurrent neural networks},
  author={Arjovsky, Martin and Shah, Amar and Bengio, Yoshua},
  booktitle={International conference on machine learning},
  pages={1120--1128},
  year={2016},
  organization={PMLR}
}

@article{gallicchio2017,
  author  = {Gallicchio, Claudio and Micheli, Alessio and Pedrelli, Luca},
  title   = {Deep Reservoir Computing: A Critical Experimental Analysis},
  journal = {Neurocomputing},
  volume  = {268},
  pages   = {87--99},
  year    = {2017}
}

@article{fujii2017,
  author  = {Fujii, Keisuke and Nakajima, Kohei},
  title   = {Harnessing Disordered-Ensemble Quantum Dynamics for Machine Learning},
  journal = {Physical Review Applied},
  volume  = {8},
  number  = {2},
  pages   = {024030},
  year    = {2017},
  doi     = {10.1103/PhysRevApplied.8.024030}
}

@article{nakajima2020,
  author  = {Nakajima, Keisuke},
  title   = {Physical Reservoir Computing---an Introductory Perspective},
  journal = {Japanese Journal of Applied Physics},
  volume  = {59},
  number  = {6},
  pages   = {060501},
  year    = {2020}
}

@article{mujal2021,
  author  = {Mujal, Pere and Mart{\'i}nez-Pe{\~n}a, Rodrigo and Nokkala, Johannes and Garc{\'i}a-Beni, Jorge and Giorgi, Gian Luca and Soriano, Miguel C. and Zambrini, Roberta},
  title   = {Opportunities in Quantum Reservoir Computing and Extreme Learning Machines},
  journal = {Advanced Quantum Technologies},
  volume  = {4},
  number  = {8},
  pages   = {2100027},
  year    = {2021}
}

@article{gks1976,
  author  = {Gorini, Vittorio and Kossakowski, Andrzej and Sudarshan, E. C. G.},
  title   = {Completely Positive Dynamical Semigroups of {$N$}-Level Systems},
  journal = {Journal of Mathematical Physics},
  volume  = {17},
  number  = {5},
  pages   = {821--825},
  year    = {1976}
}

@article{lindblad1976,
  author  = {Lindblad, G{\"o}ran},
  title   = {On the Generators of Quantum Dynamical Semigroups},
  journal = {Communications in Mathematical Physics},
  volume  = {48},
  number  = {2},
  pages   = {119--130},
  year    = {1976}
}

@article{suzuki2022,
  author  = {Suzuki, Yudai and Gao, Qi and Pradel, Karim and Yasuoka, Kenji and Yamamoto, Naoki},
  title   = {Natural Quantum Reservoir Computing for Temporal Information Processing},
  journal = {Scientific Reports},
  volume  = {12},
  pages   = {1353},
  year    = {2022}
}

@article{kobayashi2024_esp,
  author  = {Kobayashi, Shumpei and Tran, Quoc Hoan and Nakajima, Kohei},
  title   = {Extending Echo State Property for Quantum Reservoir Computing},
  journal = {Physical Review E},
  volume  = {110},
  number  = {2},
  pages   = {024207},
  year    = {2024}
}

@article{sannia2024,
  author  = {Sannia, Antonio and Mart{\'i}nez-Pe{\~n}a, Rodrigo and Soriano, Miguel C. and Giorgi, Gian Luca and Zambrini, Roberta},
  title   = {Dissipation as a Resource for Quantum Reservoir Computing},
  journal = {Quantum},
  volume  = {8},
  pages   = {1291},
  year    = {2024}
}

@article{buehner2006,
  author  = {Buehner, Michael and Young, Peter},
  title   = {A Tighter Bound for the Echo State Property},
  journal = {IEEE Transactions on Neural Networks},
  volume  = {17},
  number  = {3},
  pages   = {820--824},
  year    = {2006}
}

@article{yildiz2012,
  author  = {Yildiz, Izzet B. and Jaeger, Herbert and Kiebel, Stefan J.},
  title   = {Re-visiting the Echo State Property},
  journal = {Neural Networks},
  volume  = {35},
  pages   = {1--9},
  year    = {2012}
}

@book{petruccione2002,
  author    = {Breuer, Heinz-Peter and Petruccione, Francesco},
  title     = {The Theory of Open Quantum Systems},
  publisher = {Oxford University Press},
  year      = {2002}
}

@article{jaeger2007leaky,
  author    = {Jaeger, Herbert and Lukosevicius, Mantas and Popovici, Dan and Siewert, Udo},
  title     = {Optimization and Applications of Echo State Networks with Leaky-Integrator Neurons},
  journal   = {Neural Networks},
  volume    = {20},
  number    = {3},
  pages     = {335--352},
  year      = {2007}
}

@article{roy2007,
  author  = {Roy, Olivier and Vetterli, Martin},
  title   = {The Effective Rank: A Measure of Effective Dimensionality},
  journal = {European Signal Processing Conference (EUSIPCO)},
  pages   = {606--610},
  year    = {2007}
}

@article{atiya2000narma,
  author  = {Atiya, Amir F. and Parlos, Alexander G.},
  title   = {New Results on Recurrent Network Training: Unifying the Algorithms and Accelerating Convergence},
  journal = {IEEE Transactions on Neural Networks},
  volume  = {11},
  number  = {3},
  pages   = {697--709},
  year    = {2000}
}

@article{mackeyglass1977,
  author  = {Mackey, Michael C. and Glass, Leon},
  title   = {Oscillation and Chaos in Physiological Control Systems},
  journal = {Science},
  volume  = {197},
  number  = {4300},
  pages   = {287--289},
  year    = {1977}
}

@article{lorenz1963,
  author  = {Lorenz, Edward N.},
  title   = {Deterministic Nonperiodic Flow},
  journal = {Journal of the Atmospheric Sciences},
  volume  = {20},
  number  = {2},
  pages   = {130--141},
  year    = {1963}
}

@article{gauthier2021ngrc,
  author  = {Gauthier, Daniel J. and Bollt, Erik and Griffith, Aaron and Barbosa, Wendson A. S.},
  title   = {Next Generation Reservoir Computing},
  journal = {Nature Communications},
  volume  = {12},
  pages   = {5564},
  year    = {2021},
  doi     = {10.1038/s41467-021-25801-2}
}

@inproceedings{cho2014gru,
  author    = {Cho, Kyunghyun and van Merri{\"e}nboer, Bart and Gulcehre, Caglar and Bahdanau, Dzmitry and Bougares, Fethi and Schwenk, Holger and Bengio, Yoshua},
  title     = {Learning Phrase Representations using {RNN} Encoder--Decoder for Statistical Machine Translation},
  booktitle = {Proceedings of the Conference on Empirical Methods in Natural Language Processing (EMNLP)},
  pages     = {1724--1734},
  year      = {2014},
  doi       = {10.3115/v1/D14-1179}
}

@misc{vito2008airquality,
  author       = {De Vito, Saverio},
  title        = {Air Quality},
  year         = {2008},
  howpublished = {UCI Machine Learning Repository},
  doi          = {10.24432/C59K5F},
  url          = {https://doi.org/10.24432/C59K5F}
}

@article{sun2022systematic,
  title={A systematic review of echo state networks from design to application},
  author={Sun, Chenxi and Song, Moxian and Cai, Derun and Zhang, Baofeng and Hong, Shenda and Li, Hongyan},
  journal={IEEE Transactions on Artificial Intelligence},
  volume={5},
  number={1},
  pages={23--37},
  year={2022},
  publisher={IEEE}
}

@article{xu2026enhanced,
  title={An Enhanced Air Quality Forecasting Method Integrating Feature Mode Decomposition and Multi-layer Plasticity Echo State Network},
  author={Xu, Xinghan and Liu, Jianwei and Hu, Lei and Miao, Xingyi and Li, Yufeng and Liu, Siyu},
  journal={IEEE Transactions on Artificial Intelligence},
  year={2026},
  publisher={IEEE}
}

@article{strauss2012design,
  title={Design strategies for weight matrices of echo state networks},
  author={Strauss, Tobias and Wustlich, Welf and Labahn, Roger},
  journal={Neural computation},
  volume={24},
  number={12},
  pages={3246--3276},
  year={2012},
  publisher={MIT Press}
}

@article{mayer2017orthogonal,
  title={Orthogonal echo state networks and stochastic evaluations of likelihoods},
  author={Mayer, N Michael and Yu, Ying-Hao},
  journal={Cognitive Computation},
  volume={9},
  number={3},
  pages={379--390},
  year={2017},
  publisher={Springer}
}

\end{document}